\theoremstyle{plain}
\newtheorem{theorem}{Theorem}[section]
\newtheorem{corollary}[theorem]{Corollary}
\theoremstyle{definition}
\theoremstyle{remark}
\theoremstyle{definition} 
\let\olddefi\defi
\renewcommand{\defi}{\olddefi\normalfont}
\let\oldrmk\rmk
\renewcommand{\rmk}{\oldrmk\normalfont}
\DeclareMathOperator*{\argmin}{arg\,min}
\title{Credal Learning Theory}
\author{%
  Michele Caprio \\
  Department of Computer Science\\
  University of Manchester, Manchester, UK \\
  \texttt{michele.caprio@manchester.ac.uk} \\
  \And
  Maryam Sultana \quad \quad \quad Eleni G. Elia \quad \quad \quad Fabio Cuzzolin \\
  School of Engineering Computing \& Mathematics \\
  Oxford Brookes University, Oxford, UK\\
  \texttt{\{msultana,eelia,fabio.cuzzolin\}@brookes.ac.uk} \\
}
\begin{document}

\maketitle

\begin{abstract}
Statistical learning theory is the foundation of machine learning, providing theoretical bounds for the risk of models learned from a (single) training set, assumed to issue from an unknown probability distribution. In actual deployment, however, the data distribution may (and often does) vary, causing domain adaptation/generalization issues. In this paper we lay the foundations for a `credal' theory of learning, using convex sets of probabilities (credal sets) to model the variability in the data-generating distribution. Such credal sets, we argue, may be inferred from a finite sample of training sets. Bounds are derived for the case of finite hypotheses spaces (both assuming realizability or not), as well as infinite model spaces, which directly generalize classical results.
\end{abstract}

\section{Introduction}\label{intro}
Statistical Learning Theory (SLT) considers the problem of predicting an output $y \in \mathcal{Y}$ given an input $x \in \mathcal{X}$ using a mapping $h: \mathcal{X} \rightarrow \mathcal{Y}$, $h \in \mathcal{H}$, called \emph{model} or hypothesis, belonging to a model (or hypotheses) space $\mathcal{H}$. The loss function $l : (\mathcal{X} \times \mathcal{Y}) \times \mathcal{H} \rightarrow \mathbb{R} $ 
measures the error committed by a model $h\in\mathcal{H}$. For instance, the zero-one loss is defined as $l((x,y), h) \doteq \mathbbm{I}[y \neq h(x)]$, where $\mathbbm{I}$ denotes the indicator function, and 
assigns a zero value to correct predictions and one to incorrect ones. Input-output pairs are usually assumed to be generated i.i.d. by a probability distribution $P^\star$, which is unknown. The \emph{expected risk} -- or \textit{expected loss} -- of the model $h$, $L(h) \equiv L_{P^\star}(h) \doteq \mathbb{E}_{P^\star} [ l((x,y),h) ] = \int_{\mathcal{X}\times\mathcal{Y}} l((x,y),h) P^\star(\text{d}(x,y))$, measures
the expected value -- taken with respect to $P^\star$ -- of loss $l$.
The expected risk minimizer $h^\star \in \argmin_{h \in \mathcal{H}} L(h)$ is any hypothesis in the given model space $\mathcal{H}$ that minimizes the expected risk. 
Given a training dataset $D = \{(x_1, y_1), \ldots,(x_n, y_n)\}$ whose elements are 
drawn independently and identically distributed (i.i.d.) from probability distribution $P^\star$, the \textit{empirical risk} of a hypothesis $h$ is the average loss over $D$. 
The \emph{empirical risk minimizer} (ERM), i.e., the model $\hat{h}$ one actually learns from the training set $D$, is the one minimizing the empirical risk \citep{liang}.
Statistical Learning Theory seeks upper bounds for the expected risk $L(\hat{h})$ of the ERM $\hat{h}$, and in turn, for the \textit{excess risk}, that is, the difference between  $L(\hat{h})$ and the lowest expected risk $L(h^\star)$. This endeavor is pursued 
under increasingly more relaxed assumptions about the nature of the hypotheses space $\mathcal{H}$. Two common such assumptions are that either the model space is finite, or that there exists a model with zero expected risk (\emph{realizability}).

In real-world situations, however, the data distribution may (and often does) vary, causing issues of \emph{domain adaptation} (DA) \citep{ben-david} or \emph{generalization} (DG) \citep{Piva_2023_WACV}.
Domain adaptation and generalization are interrelated yet distinctive concepts in machine learning, as they both deal with the challenges of transferring knowledge across different domains. The main goal of DA is to adapt a machine learning model trained on source domains to perform well on target domains. In opposition, DG aims to train a model that can generalize well to unseen data/domains not available during training. In simple terms, DA works on the assumption that our source and target domains are related to each other, meaning that they somehow follow a similar data-generating probability distribution. DG, instead, assumes that the trained model should be able to handle unseen target data.

Attempts to 
derive generalization bounds under more realistic conditions
within classical SLT have been made (see Section \ref{sec:related}).
Those approaches, however, 
are characterized by a lack of generalizability, and the use of strong assumptions. A more detailed account of the state of the art and their limitations is discussed in Section \ref{sec:related}.
In opposition to all such proposals, our learning framework leverages 
\emph{Imprecise Probabilities} (IPs) to provide a radically different solution to the construction of bounds in learning theory.

A hierarchy of formalisms aimed at mathematically modeling the `epistemic' uncertainty induced by sources such as lack of data, missing data or data which is imprecise in nature \citep{cuzzolin2021springer,second-order,volume_caprio}, IPs have been successfully employed in the design of neural networks providing both better accuracy and uncertainty quantification to predictions \citep{caprio_IBNN,ibcl,manchingal2022,manchingal2023,sensoy2018,manchingal2024}. To date, however, they have never been considered as a tool to address the foundational issues of statistical learning theory associated with data drifting.

\textbf{Contributions}.
This paper provides two innovative 
contributions: (1) the formal definition of a new learning setting in which models are inferred from a (finite) sample of training sets (via either objectivist or subjectivist modeling techniques, as explained in Section \ref{sec:learning}), rather than a single training set, each assumed to have been generated by a single data distribution (as in classical SLT); 
(2) the derivation of generalization bounds to the expected risk of a model learned 
in this new learning setting,
under the assumption that the epistemic uncertainty induced by the available 
training sets can be described by a \emph{credal set} \cite{levi2}, i.e., a convex set of (data generating) probability distributions. 

The overall framework is illustrated in Figure \ref{fig:framework}.
Generalized upper bounds under credal uncertainty are derived under three increasingly realistic sets of assumptions, mirroring classical statistical learning theory treatment: (i) finite hypotheses spaces with realizability, (ii) finite hypotheses spaces without realizability, and (iii) infinite hypotheses spaces.
We show that the corresponding classical results in SLT are special cases of the ones derived in the present paper.

\begin{figure*}[ht!]
    \centering
    \includegraphics[width = 1\textwidth]{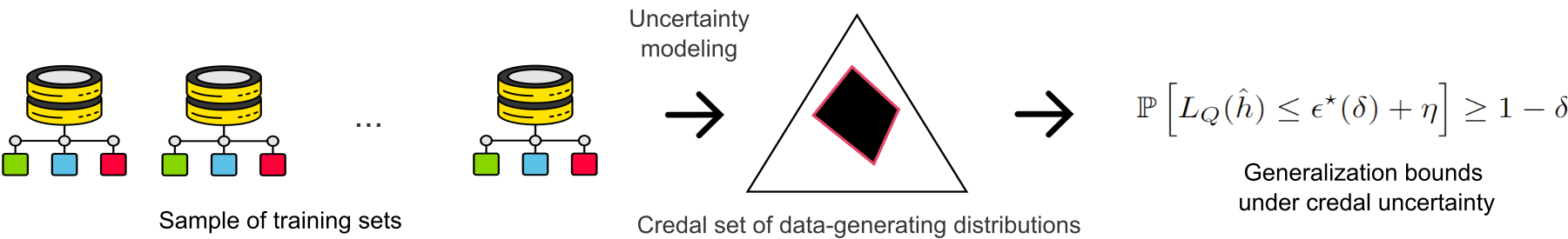}
    \caption{Graphical representation of the proposed learning framework. Given an available finite sample of training sets, each assumed to be generated by a single data distribution, one can learn a credal set $\mathcal{P}$ of data distributions in either a frequentist or subjectivist fashion (Section \ref{sec:learning}). This allows us to derive generalization bounds under credal uncertainty (Section \ref{sec:bounds}).}
    \label{fig:framework}
\end{figure*}

\textbf{Paper outline}. 
The paper is structured as follows. First (Section \ref{sec:related}) we present the existing work addressing data distribution shifts in learning theory. We then introduce our new learning framework (Section \ref{sec:learning}). In Section \ref{sec:bounds} we illustrate the bounds derived under credal uncertainty and show how classical results can be recovered as special cases.
Section \ref{sec:conclusions} concludes 
and outlines future undertakings. 
We prove our results in Appendix \ref{proofs}, and we provide synthetic experiments on our first two main results (Theorems \ref{thm1} and \ref{thm2}) in Appendix \ref{app:exp}.

\section{Related Work} \label{sec:related}
 
The standard statistical approach to generalization is based on the assumption that test and training data are i.i.d. according to an unknown distribution. This assumption can fall short in real-world applications: as a result, many recent papers have been focusing on the ``Out of Distribution" (OOD) generalization problem, also known as domain generalization (DG), to address the discrepancy between test and training distribution(s) \citep{gulrajani2020search, koh2021wilds,cispa}. Extensive surveys of existing methods and approaches to DG can be found in \citet{wang2022generalizing,zhou2022domain}. 
Although several proposals for learning bounds with theoretical guarantees have been made within DA, only a few attempts have been made in the field of DG \citep{deshmukh2019generalization,redko2020survey}. 
Most theoretical attempts have focused on kernel methods, starting from the seminal work of \citet{blanchard2011generalizing}, spanning to a body of later work (see for example \citet{ deshmukh2019generalization, hu2020domain,muandet2013domain}). In this line of work, assumptions related to boundedness of kernels and continuity of feature maps and loss function render the approaches not directly applicable to broader scenarios. 

Other work has focused on providing theoretical grounds using domain-adversarial learning method; in this approach, the authors use a convex combination of source domains in order to approximate the target distribution leveraging H-divergence \citep{albuquerque2019generalizing}. 
\citet{ye2021towards} have attempted to relax assumptions to provide more general bounds, focusing on feature distribution characteristics; the authors have introduced terms related to stability and discriminative power to calculate the error bound on unseen domains, through the use of an expansion function. Nonetheless, as the authors acknowledge,
practical challenges arise concerning the estimation of the expansion function and the choice of a constraint on the top model to improve convergence.

Researchers have also focused on adaptation to new domains over time, treating DG as an online game and the model as a player minimizing the risk associated with introducing new distributions by an adversary at each step \citep{rosenfeld2022online}. However, in scenarios where the training distribution is significantly outside the convex hull of training distributions \citep{albuquerque2019generalizing}, or because of unmet strong convexity loss function  assumption \citep{rosenfeld2022online}, they fall short from achieving robust generalization.
Causality principles have been leveraged in this sense, for example by \citet{bellot2022partial,sheth2022domain}, 
to provide distributional robustness guarantees using causal diagrams and source domain data. However, causal approaches for improving model robustness across varying domains pose important challenges including reliance on domain knowledge.
Researchers have also explored generalization bounds for DG based on the Rademacher complexity, allowing for the approach to be applicable to a broader range of models \citep{li2022finding}.   Though this simplification has a number of practical benefits, models trained under covariate shift assumptions might suffer in terms of robustness to other distribution shift types. 
On the empirical analysis side, \citet{gulrajani2020search} have provided
a comprehensive review of the state of the art. Though a simple ERM was found to outperform other more sophisticated methods in benchmark experiments \citep{cha2021swad}, this approach has been criticized for its non-generalizability. In this direction, \citet{izmailov2018averaging} have highlighted the importance of searching for flat minima in the training process for improved generalization.

All the aforementioned
approaches take a point estimate-like, 
stance
(i.e., assuming a single training set) to the derivation of generalization bounds. In this paper, in opposition, 
we explicitly acknowledge
the uncertainty inherent to domain variation in the form of a sample of training sets, each assumed to be generated by a different distribution, and propose a robust and flexible approach representing the resulting epistemic uncertainty 
via credal sets. Related works on the computational complexity specific to the use of credal sets are discussed in Appendix \ref{app-extra-rel-work}.

 



\section{Credal Learning} \label{sec:learning}

Let us formalize the notion of learning a model from a collection of (training) sets of data, each issued from a different `domain' characterized by a single, albeit unknown, data-generating probability distribution.
Assume that we wish to learn a mapping 
$h: \mathcal{X} \rightarrow \mathcal{Y}$ between an input space $\mathcal{X}$ and an output space $\mathcal{Y}$ (where, once again, the mapping $h$ belongs to a hypotheses space $\mathcal{H}$), having as evidence a finite sample of training sets, $D_1, \ldots, D_N$, $D_i = \{ (x_{i,1}, y_{i,1}), \ldots, (x_{i,n_i}, y_{i,n_i}) \}$. Assume also that the data in each set $D_i$ has been generated by a distinct probability distribution $P^\star_i$. 
The question we want to answer is: What sort of guarantees can be derived on the expected risk of a model learned from such a sample of training sets? How do they relate to classical Probably Approximately Correct (PAC) bounds from statistical learning theory?

\subsection{Objectivist Modeling} \label{sec:frequentist}

While in classical statistical learning theory results are derived assuming no knowledge about the data-generating process, the theorems and corollaries in this paper do require some knowledge, although incomplete, of the true distribution. To be more specific, we will posit that, by leveraging the available evidence $D_1,\ldots,D_N$, the agent is able to elicit a credal set -- i.e., a closed and convex set of probabilities -- that {contains} the true data generating process $P^\text{true} \equiv P^\star_{N+1}$ for a \emph{new} 
set of data 
$D_{N+1}$ (that we call the \textit{test set}), possibly different from $D_1,\ldots,D_N$. As we shall see in Section \ref{sec:bounds}, though, this extra modeling effort allows us to derive stronger results. 

There are at least two ways in which such a credal set can be derived, that is, via either an \textit{objectivist} or a \textit{subjectivist} modeling stance. In this section, we present the former. We start by inspecting the frequentist approach to objectivist modeling, 
considering in particular epsilon-contamination models (Section \ref{sec:contamination}) and belief functions models (Sections \ref{sec:beliefs}, \ref{sec:inferring}).
A further objectivist model based on fiducial inference \citep{almond92fiducial,hannig2009generalized} is outlined in Appendix \ref{sec:fiducial}.

\subsubsection{Epsilon-contamination models}\label{sec:contamination}

In classical frequentist statistics, given the available dataset, the agent assumes the analytical form of a likelihood $\mathcal{L}$ (not to be confused with the expected loss function, which we denote by a Roman letter $L$), e.g., a Normal or a Gamma distribution. As shown by \citet{huber}, though, small perturbations of the specified likelihood can induce substantial differences in the conclusions drawn from the data. A \textit{robust frequentist} agent is thus interested in statistical methods that may not be fully optimal 
under the ideal `true' likelihood model,
but still lead to reasonable conclusions if the ideal model is only approximately true \citep{intro_ip}. 

To account for this, the agent specifies the class of \emph{$\epsilon$-contaminated} distributions 
$$\mathcal{P}=\{P: P=(1-\epsilon)\mathcal{L}+\epsilon Q \text{, } \forall Q\},$$ 
where $\epsilon$ is some positive quantity in $(0,1)$, and $Q$ is any distribution on $\mathcal{X}\times\mathcal{Y}$. \citet{wasserman} show that $\mathcal{P}$ is indeed a (nonempty) credal set. In view of this robust frequentist goal, then, requiring that the true data generating process belongs to $\mathcal{P}$ is a natural assumption.

In our framework, in which a finite sample of $N$ training sets $\{D_i\}_{i=1}^N$ is available, one approach to building the desired credal set is to
specify $N$ many likelihoods $\{\mathcal{L}_i\}_{i=1}^N$ 
and
$\epsilon_i$-contaminate each of them to obtain $\mathscr{L}_i=\{P: P=(1-\epsilon_i)\mathcal{L}_i+\epsilon_i Q \text{, } \forall Q\}$, $i\in\{1,\ldots,N\}$.
The credal set $\mathcal{P}$ can then be derived by setting
$$\mathcal{P} = \text{Conv}(\cup_{i=1}^N \mathscr{L}_i),$$ 
where $\text{Conv}(\cdot)$ denotes the convex hull operator.\footnote{It is easy to see that the set $\mathcal{P}$ built this way is indeed a credal set. This is because it is (i) convex by definition, and (ii) closed because it is the union of finitely many closed sets.} An immediate consequence of \citet{wasserman} and references therein is that $\mathcal{P}=\text{Conv}(\cup_{i=1}^N \mathscr{L}_i)=\{P: P(A) \geq \underline{\mathcal{L}}(A) \text{, } \forall A \subseteq \mathcal{X}\times\mathcal{Y}\}$, where $\underline{\mathcal{L}}(A)=\min_{i\in\{1,\ldots,N\}} (1-\epsilon_i) \mathcal{L}_i(A)$, for all $A \subset \mathcal{X}\times\mathcal{Y}$. A simple numerical example for such a procedure is given in Appendix \ref{app:cont}.

\subsubsection{\text{Bel}ief functions as lower probabilities}\label{sec:beliefs}

An alternative way to derive a credal set from the sample training evidence can be formulated within the framework of the
Dempster-Shafer theory of evidence \citep{dempster,Shafer76}.

A \emph{random set} \citep{Kendall74foundations,Matheron75,Molchanov05,Nguyen78} is a set-valued
random variable, modeling random experiments in which observations come in the form of sets. In the case of finite sample spaces, they are called \emph{belief functions} \citep{Shafer76}.
While classical discrete mass functions assign normalized, non-negative values to the \emph{elements} $\omega \in \Omega$ of their sample space, a belief function independently assigns normalized, non-negative mass values to \emph{subsets} of the sample space: $m(A) \geq 0$, for all $A \subseteq \Omega$, $\sum_{A \subseteq \Omega} m(A) = 1$. The {belief function} 
associated with a mass function $m$ then 
measures the total mass of the subsets of each event $A$, $\text{Bel}(A) = \sum_{B\subseteq A} m(B)$. 

Crucially, a belief function can be seen as the lower probability (or lower envelope) of the credal set 
\[
\mathcal{M}(\text{Bel})
=  \{ P : \Omega
\rightarrow [0,1] : \text{Bel}(A) \leq P (A) \text{, } \forall A \subseteq \Omega  \},
\]
where $P$ is a data distribution. The dual upper probability to $\text{Bel}$ is
$\text{Pl} (A) \doteq 1 - \text{Bel} (A^c)$, for all $A\subseteq\Omega$. When restricted to singleton elements, it is called the \textit{contour function}, $\text{pl}(\omega) = \text{Pl}(\{\omega\})$.

\subsubsection{Inferring belief functions from data}\label{sec:inferring}

There are various ways one can infer a belief (or, equivalently, a plausibility) function from (partial) data, such as a sample of training sets. If a classical likelihood $\mathcal{L}$ having probability density or mass function (pdf/pmf) $\ell$ is available (as assumed in the frequentist paradigm),\footnote{Here, pdf/pmf $\ell$ is 
the Radon-Nikodym derivative of $\mathcal{L}$ with respect to a sigma-finite dominating measure $\nu$.} one can build a belief function by using the normalized likelihood as its contour function. That is, $\text{pl}(\omega) \doteq \frac{\ell(\omega)}{sup_{\omega' \in \Omega} \ell(\omega')}$, for all $\omega \in\Omega$, where $\Omega = \mathcal{X}\times\mathcal{Y}$ is the space where the training pairs live.

As before, in our framework in which a finite sample of $N$ training sets $\{D_i\}_{i=1}^N$ is available, we can specify $N$ many likelihoods $\{\mathcal{L}_i\}_{i=1}^N$, and their corresponding pdf/pmf's $\{\ell_i\}_{i=1}^N$. Then, we can compute $\overline{\ell}(\omega)=\max_{i\in\{1,\ldots,N\}} \ell_i(\omega)$, for all $\omega \in\Omega$, and in turn 
\begin{equation} \label{eq:inference-likelihood}
\text{pl}(\omega) = \overline{\ell}(\omega)/sup_{\omega' \in \Omega} \overline{\ell}(\omega'),
\end{equation}
for all $\omega\in\Omega$.\footnote{It is easy to see that ${\text{pl}}$ is a well-defined plausibility contour function.} In turn, our credal set is derived as
$\mathcal{P}=\{P: \text{d}P/\text{d}\nu=p \leq {\text{pl}}\}$, where $\text{d}P/\text{d}\nu=p$ is the pdf/pmf associated with distribution $P$ via its Radon-Nikodym derivative with respect to a sigma-finite dominating measure $\nu$. Such construction means that $\mathcal{P}$ includes all distributions whose pdf/pmf's are element-wise dominated by plausibility contour ${\text{pl}}$. 

\textbf{Numerical example.}
Let $\Omega=\{\omega_1,\omega_2,\omega_3\}$, where $\omega_j=(x_j,y_j)$, $j\in\{1,2,3\}$. Suppose also that we observed four 
sample training sets
$D_1,\ldots,D_4$ and that we specified the likelihood pmf's $\ell_1,\ldots,\ell_4$ as in Table 1.\footnote{In this case, $\nu$ is the counting measure.} There, we see e.g. how pmf $\ell_1$ assigns a probability of $0.3$ to the element $\omega_1$ of the state space $\omega$, and similarly for the other pmf’s and the other elements of the state space. It is immediate to see that $\overline{\ell}=(0.4,0.8,0.6)^\top$.\footnote{We write the upper likelihood $\overline{\ell}$ in vector form for notational convenience. $^\top$ denotes the transpose.} Then, by 
Equation (\ref{eq:inference-likelihood}),
we have that $\text{pl}=(0.5,1,0.75)^\top$.

We can then derive the lower $\underline{P}$ and upper $\overline{P}$ probabilities of $\mathcal{P}=\{P: \text{d}P/\text{d}\nu=p \leq {\text{pl}}\}$ on $2^\Omega$ as in Table 2, using the results in \citet[Section 4.4]{intro_ip}. That is, 
$\underline{P}(A)=\max \left\lbrace{ \sum_{\omega \in A} \underline{P}(\omega) , 1 - \sum_{\omega \in A^c} \overline{P}(\omega)}\right\rbrace$ and $ \overline{P}(A)=\min \left\lbrace{ \sum_{\omega \in A} \overline{P}(\omega) , 1 - \sum_{\omega \in A^c} \underline{P}(\omega)}\right\rbrace.$
As we can see from the visual representation of $\mathcal{P}$ (the yellow convex region in Figure 2), the probability bounds imposed by the credal set are not too stringent, and in line with the evidence encapsulated in $\ell_1,\ldots,\ell_4$. Hence, the assumption that $P^\text{true} \equiv P^\star_5\in\mathcal{P}$ is quite plausible. 
\begin{figure}[h!]
    \centering
    \includegraphics[width = .9\textwidth]{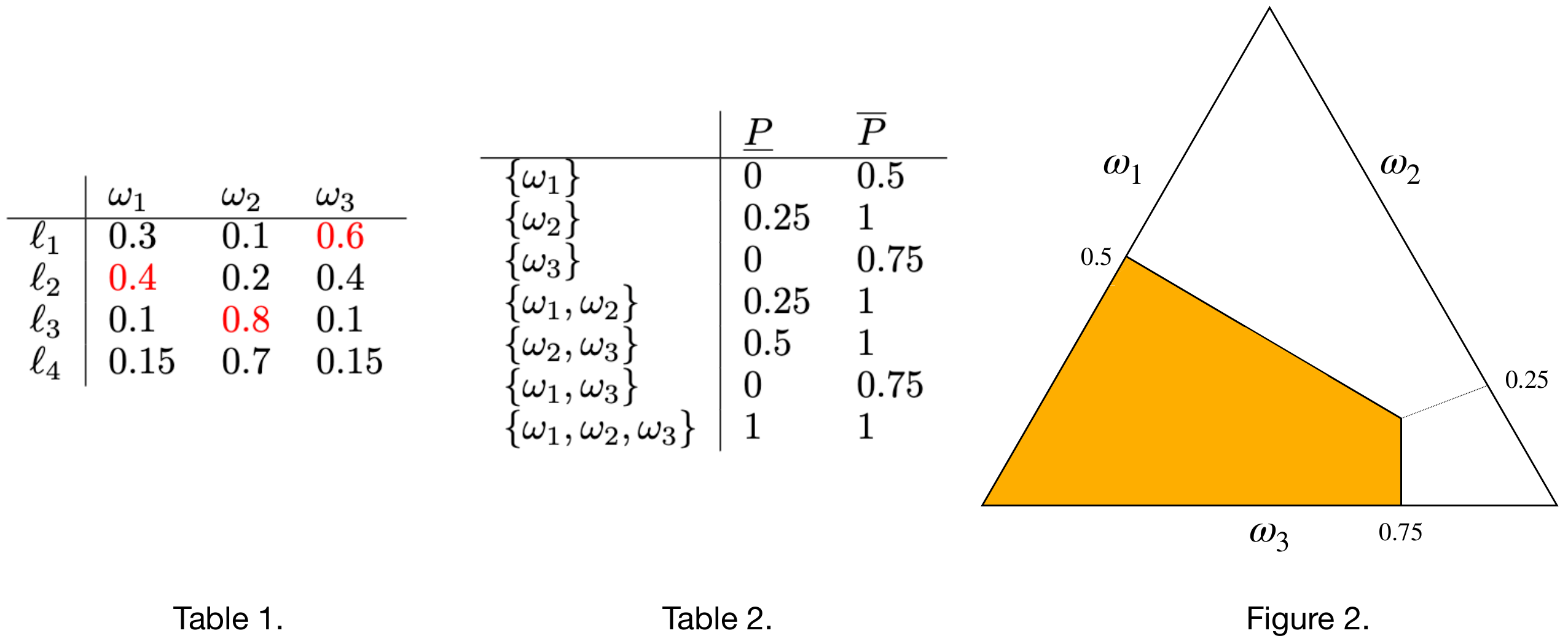} 
\end{figure}

\subsection{Subjectivist Modeling}\label{sec:subj}


Another way of specifying a credal set is by taking a \textit{personalistic} (or subjectivist) route \citep{caprio,walley}. 
In this approach, let $\{A_\mathcal{S}\}$ be a finite collection of subsets of $\Omega = \mathcal{X}\times\mathcal{Y}$. The agent 
first specifies
the lower probability $\underline{P}_\mathcal{S}$ on the power set $2^\mathcal{S}$, where $\mathcal{S}=\cup A_\mathcal{S}$
-- i.e., the smallest value that the probability of any subset of $\mathcal{S}$ can take on.
This can be done, for example, as a result of the empirical distribution, as described below. 

In our framework in which a finite sample of $N$ training sets $\{D_i\}_{i=1}^N$ is available, we have that $\{A_\mathcal{S}\} = \{D_i\}_{i=1}^N$, and so $\mathcal{S}=\cup_{i=1}^N D_i$.
Recall that we originally
denoted by $P_i^\star$ the true data generating process for training set $D_i$, $i\in\{1,\ldots,N\}$: the empirical distribution $P^\text{emp}_i$ is a (non-parametric) estimation of $P_i^\star$. 
On the other hand, recall that we denoted by $P^\text{true} \equiv P^\star_{N+1}$ the true data generating process for 
the test set of data
$D_{N+1}$.

The lower probability $\underline{P}_\mathcal{S}$ is defined as follows. For every element $(x,y)$ in $\mathcal{S}=\cup_{i=1}^N D_i$, we let $\underline{P}_\mathcal{S}(\{(x,y)\}) \doteq \min \{P^\text{emp}_i(\{(x,y)\}) : P^\text{emp}_i(\{(x,y)\}) >0\}$. Requiring $\underline{P}_\mathcal{S}(\{(x,y)\})= \min_i P^\text{emp}_i(\{(x,y)\})$ is not enough because, if the training data sets do not overlap, we would end up having lower probability $0$ for some singleton that we observed at training time, and hence we would be neglecting some collected evidence. The lower probability $\underline{P}_\mathcal{S}$ of all the other non-singleton elements $B$ of $\mathcal{S}$ is computed according to \cite[Equation (4.6a)]{intro_ip}, that is, 
\begin{equation}\label{low-prob-non-singl}
    \underline{P}_\mathcal{S}(B)=\max\left\lbrace{\sum_{(x,y) \in B} \underline{P}_\mathcal{S}(\{(x,y)\}), 1 - \sum_{(x,y) \in B^c} \max_i P^\text{emp}_i (\{(x,y)\})}\right\rbrace.
\end{equation}

\textbf{Numerical example.}
Suppose for simplicity that $\mathcal{X}=\{x\}$, so that $\Omega=\{x\} \times \mathcal{Y} \simeq \mathcal{Y}$, and let $\mathcal{Y}=\{1,\ldots,10\}$. Suppose $N=2$, and let $D_1$ be a collection of three $4$'s, three $5$'s, and three $6$'s. Let also $D_2$ be a collection of six $5$'s and two $6$'s. 
Then, $\mathcal{S}=\{4,5,6\}$ and $2^\mathcal{S}=\{\emptyset, \{4\}, \{5\}, \{6\}, \{4,5\}, \{5,6\}, \{4,6\}, \{4,5,6\}\}$. 
In turn, $\underline{P}_{\mathcal{S}}(\{4\})=1/3$, $\underline{P}_{\mathcal{S}}(\{5\})=1/3$, and $\underline{P}_{\mathcal{S}}(\{6\})=1/4$. By \eqref{low-prob-non-singl}, this implies that $\underline{P}_{\mathcal{S}}(\{4,5\})=2/3$,
$\underline{P}_{\mathcal{S}}(\{5,6\})=2/3$, and $\underline{P}_{\mathcal{S}}(\{4,6\}) = \max \{1/3 + 1/4, 1 - \max_{i\in\{1,2\}} P^\text{emp}_i (\{5\})\} = \max \{ 7/12, 1 - \max \{1/3, 3/4\} = \max \{ 7/12 , 1 - 3/4 \} = 7/12$. Of course, $\underline{P}_{\mathcal{S}}(\emptyset)=0$ and $\underline{P}_{\mathcal{S}}(\mathcal{S})=1$.





\subsubsection{Walley's Natural Extension}\label{walley}

Once a lower probability $\underline{P}_\mathcal{S}$ on $2^\mathcal{S}$ is inferred, it can be (coherently uniquely) extended to a lower probability $\underline{P}$ on the whole sigma-algebra endowed to $\mathcal{X}\times\mathcal{Y}$ through an operator called \emph{natural extension} \citep{decooman}, \citep[Sections 3.1.7-3.1.9]{walley}. The resulting extended lower probability is such that $\underline{P}(B) = \underline{P}_\mathcal{S}(B)$, for all $B \in 2^\mathcal{S}$, and a lower probability value $\underline{P}(A)$ is assigned to all the other subsets $A$ of $\mathcal{X}\times\mathcal{Y}$ that are not in $\mathcal{S}$. 
It is also \emph{coherent} -- in Walley's terminology -- because, in the behavioral interpretation of probability derived from \citet{definetti1,definetti2}, its values cannot be used to construct a bet that would make the agent lose for sure, no matter the outcome of the bet itself. 

Once $\underline{P}$ is obtained, the agent can consider the \emph{core} of $\underline{P}$, $\mathcal{M}(\underline{P})\doteq \{P: \underline{P}(A) \leq P(A) \text{, } \forall A \subseteq \mathcal{X}\times\mathcal{Y}\}$, i.e., the collection of all the (countably additive) probabilities that set-wise dominate $\underline{P}$. 
Scholars
\cite{cerreia,marinacci2} have shown that $\mathcal{P} = \mathcal{M}(\underline{P})$ 
is indeed a (nonempty) credal set. 

\subsubsection{Properties of the Core}

As shown in \citet[Example 1]{amarante2} and \citet[Examples 6, 7, 8]{amarante}, given a generic credal set $\mathcal{Q}$ whose lower envelope is $\underline{Q}$ -- i.e., a credal set $\mathcal{Q}$ for which $\underline{Q}(\cdot)=\inf_{Q\in\mathcal{Q}}Q(\cdot)$ -- we have that $\mathcal{M}(\underline{Q}) \supseteq \mathcal{Q}$. From an information-theoretic perspective, this means that the uncertainty encapsulated in the core of a lower probability $\underline{Q}$ is larger than that in any credal set whose lower envelope is $\underline{Q}$ \citep{dipk,ergo.th,constriction,novel_bayes,gong}. In turn, $\mathcal{M}(\underline{Q})$ is the largest credal set the agent can build which represents their partial knowledge. 
In our learning framework, given the available evidence $D_1,\ldots,D_N$ that the agent uses to derive $\underline{P}_\mathcal{S}$, 
if the agent is confident
that $P^\text{true}(B) \geq \underline{P}_\mathcal{S}(B)$, for all $B \in 2^\mathcal{S}$,
then it is natural to assume $P^\text{true} \equiv P^\star_{N+1} \in \mathcal{M}(\underline{P})$. 




\section{Generalization Bounds under Credal Uncertainty} \label{sec:bounds}

Consider a 
credal set $\mathcal{P}$ on $\mathcal{X}\times\mathcal{Y}$ derived as in Section \ref{sec:learning}, and assume that we collect new evidence in the form of a test set of data $D_{N+1}=\{(x_{N+1,1},y_{N+1,1}),\ldots,(x_{N+1,n_{N+1}},y_{N+1,n_{N+1}})\}$. To ease notation, in the following we refer to the newly acquired evidence as $(x_1, y_1), \ldots,(x_n, y_n)$. 

An assumption that is common to all the results we present in this section is that $(x_1, y_1), \ldots,(x_n, y_n) \sim P \equiv P^\text{true} \equiv P^\star_{N+1}$ i.i.d., and $P\in\mathcal{P}$. This means that either the new evidence comes from one of the distributions that generated $D_1,\ldots,D_N$, or that it is at least \textit{compatible} with the credal set we built from past evidence \citep{gong}. That is, either $P$ set-wise dominates the lower probability $\underline{P}$ of $\mathcal{P}$ (as in Sections \ref{sec:frequentist}, \ref{sec:beliefs}, and \ref{sec:subj}), or it is set-wise dominated by the upper probability $\overline{P}$ of $\mathcal{P}$ (induced, e.g., by a contour function as in Section \ref{sec:inferring}). This is a rather 
natural assumption, especially when the stream of training sets we collect pertains to similar experiments or tasks. For example, this is the case in Continual Learning (CL), where it is customary to assume {\em task similarity} \citep{ibcl}, that is, to posit that the oracle distributions pertaining to all the tasks of interest are all contained in a TV-ball of radius chosen by the user. A more complete discussion on the relation between our credal approach and CL can be found in Appendix \ref{app-extra-CL}. It is also the case in the healthcare setting, where experts’ opinions can be incorporated alongside empirical data (plausible probability distributions) to represent the probability uncertainty, for example, for the prognosis of a disease given a set of patient characteristics/biomarkers \citep{seoni}. To make sure that the credal set constructed encapsulates most of the ``potential distributions needed'', a number of approaches can be taken including incremental learning; in this approach the AI model learns and updates knowledge incrementally. As a result, the credal set can be continuously updated (via incremental learning) as new data become available. In this direction, learning health systems are being implemented in practice. These are health systems ``\href{https://www.ahrq.gov/learning-health-systems/about.html#:~:text=Defining%20a%20Learning%20Health%20System,knowledge%20is%20put%20into%20practice.}{in which internal data and experience are systematically integrated with external evidence, and that knowledge is put into practice}''.

We note, in passing, that assuming $P\in\mathcal{P}$ is less stringent than what is typically done in frequentist statistics, where the data-generating process is assumed to be perfectly captured by the likelihood. We instead posit that the true data generating process for the new evidence available belongs to a credal set $\mathcal{P}$, that was derived by the sample of training sets $D_1,\ldots,D_N$. 

Formal ways of checking
whether the assumption $P\in\mathcal{P}$ holds exist, e.g., by following what \citet[Section 7]{CELLA20221} and \citet{alireza} do for credal-set-based conformal prediction methods, or more general approaches \citep{opt_test,siu-krik,gao,xing,mortier}.\footnote{If we want to avoid to formally check the assumption that $P\in\mathcal{P}$, we need to show on a case-by-case basis that either the credal set covers a non-negligible portion of the distribution class of interest, or that even a small credal set is ``good enough'' for the analysis at hand.} That being said, deriving PAC-like guarantees on the correct distribution $P$ being an element of the credal set $\mathcal{P}$ 
is a desirable objective - 
a task that we defer to future work. Here, we focus on formally deriving what the consequences are in terms of generalization bounds.

\subsection{Realizability and Finite Hypotheses Space} \label{both-hp} 

\begin{theorem}\label{thm1}
    Let $(x_1, y_1), \ldots,(x_n, y_n) \sim P$ i.i.d., where $P$ is any element of the credal set $\mathcal{P}$. Let the empirical risk minimizer be
    \begin{equation}\label{hat_h}
        \hat{h} \in \argmin_{h\in\mathcal{H}} \frac{1}{n} \sum_{i=1}^n l((x_i,y_i),h).
    \end{equation}
    Assume that there exists a realizable hypothesis, that is, $h^\star\in\mathcal{H}$ such that $L_P(h^\star)=0$, and 
    that the model space $\mathcal{H}$ is finite. Let $l$ denote the zero-one loss, and 
    fix any $\delta\in (0,1)$. Then, $\mathbb{P}[ L_P(\hat{h})\leq \epsilon^\star(\delta) ]  \geq 1-\delta$,
    where $\epsilon^\star(\delta)$ is a well-defined quantity that depends only on $\delta$ and on extreme elements $\text{ex}\mathcal{P}$ of $\mathcal{P}$, i.e., those that cannot be written as a convex combination of one another.
\end{theorem}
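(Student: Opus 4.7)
The plan is to follow the textbook realizable-case PAC argument and then use the convex structure of $\mathcal{P}$ (in particular the Bauer/Krein--Milman principle) to reduce any $P$-dependent quantity to its worst value over the extreme points $\text{ex}\mathcal{P}$.

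First, realizability under the zero-one loss forces zero empirical risk almost surely: since $L_P(h^\star)=\mathbb{E}_P[\mathbbm{I}[y\neq h^\star(x)]]=0$ and $(x_i,y_i)\sim P$ i.i.d., each $l((x_i,y_i),h^\star)=0$ almost surely, so $h^\star$, and therefore the ERM $\hat h$, attains zero empirical risk. Second, fix $\epsilon>0$ and set $\mathcal{H}_\epsilon=\{h\in\mathcal{H}:L_P(h)>\epsilon\}$; observing $\{L_P(\hat h)>\epsilon\}\subseteq\{\exists h\in\mathcal{H}_\epsilon:\hat L(h)=0\}$ and applying a union bound together with independence yields
\[
\mathbb{P}_P[L_P(\hat h)>\epsilon] \;\leq\; \sum_{h\in\mathcal{H}_\epsilon}(1-L_P(h))^n.
\]

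The credal twist enters here: for each fixed $h$, $P\mapsto L_P(h)$ is affine and $t\mapsto(1-t)^n$ is convex and decreasing on $[0,1]$, so $P\mapsto(1-L_P(h))^n$ is convex in $P$. Since $\mathcal{P}$ is closed and convex, the Bauer maximum principle delivers $\sup_{P\in\mathcal{P}}(1-L_P(h))^n=\max_{Q\in\text{ex}\mathcal{P}}(1-L_Q(h))^n$, so the worst case over the credal set is read off the extreme points. Summing over the relevant hypotheses and taking the worst case over $\mathcal{P}$ yields an expression of the form $|\mathcal{H}|\cdot F(\text{ex}\mathcal{P},\epsilon,n)$; setting it equal to $\delta$ and inverting in $\epsilon$ gives $\epsilon^\star(\delta)$, which by construction depends only on $\delta$ and $\text{ex}\mathcal{P}$ (and implicitly on the fixed $|\mathcal{H}|$ and $n$).

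The main obstacle is the double role of $P$ as both sampling law and measure against which the risk is computed: a careless argument would attempt to reduce the probability statement itself to extreme points, which is not meaningful because the sampling measure changes with $P$. The correct move is to apply the extreme-point reduction only to the $P$-dependent factor $(1-L_P(h))^n$ inside the union bound, leaving the event space and the hypothesis set intact; some additional care is needed because $\mathcal{H}_\epsilon$ itself depends on $P$, which either dissolves once one upper-bounds $(1-L_P(h))^n\le(1-\epsilon)^n$ on $\mathcal{H}_\epsilon$ or is absorbed into $F$. As a sanity check, when $\mathcal{P}=\{P\}$ we have $\text{ex}\mathcal{P}=\{P\}$ and the expression collapses to $|\mathcal{H}|(1-\epsilon)^n\le\delta$, recovering the classical PAC bound $\epsilon^\star(\delta)=\log(|\mathcal{H}|/\delta)/n$ (via $1-\epsilon\le e^{-\epsilon}$) as promised.
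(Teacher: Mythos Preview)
Your PAC skeleton (realizability forces $\hat L(\hat h)=0$; union bound over bad hypotheses; each contributes $(1-L_P(h))^n$) is correct and matches the paper. The gap is in \emph{where} you invoke the credal structure. Applying Bauer to $P\mapsto (1-L_P(h))^n$ is valid but useless here: for $h\in\mathcal{H}_\epsilon$ you already have $(1-L_P(h))^n\le(1-\epsilon)^n$, and replacing this by $\max_{Q\in\text{ex}\mathcal{P}}(1-L_Q(h))^n$ can only loosen the bound (it may even equal $1$ if some extreme $Q$ makes $h$ realizable). Once you take your own fallback $(1-L_P(h))^n\le(1-\epsilon)^n$, the only $P$-dependence left is in $|\mathcal{H}_\epsilon|=|B_P|$; bounding that by $|\mathcal{H}|$ gives exactly the classical bound of Corollary~\ref{cor1}, so your final expression $|\mathcal{H}|\cdot F(\text{ex}\mathcal{P},\epsilon,n)$ carries no genuine dependence on $\text{ex}\mathcal{P}$.

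The paper instead applies the extreme-point reduction to the \emph{count} $|B_P|$. After $(1-L_P(h))^n\le e^{-\epsilon n}$ one has $\mathbb{P}[L_P(\hat h)>\epsilon]\le|B_P|\,e^{-\epsilon n}$, and then $|B_P|\le\big|\bigcup_{Q\in\mathcal{P}}B_Q\big|=\big|\bigcup_{Q^{\mathrm{ex}}\in\text{ex}\mathcal{P}}B_{Q^{\mathrm{ex}}}\big|$. Bauer enters here through the \emph{affine} map $P\mapsto L_P(h)$: its supremum over $\mathcal{P}$ is attained on $\text{ex}\mathcal{P}$, so $L_P(h)>\epsilon$ for some $P\in\mathcal{P}$ iff $L_{Q^{\mathrm{ex}}}(h)>\epsilon$ for some extreme $Q^{\mathrm{ex}}$, which is precisely the equality of unions. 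Solving $\big|\bigcup_{Q^{\mathrm{ex}}}B_{Q^{\mathrm{ex}}}\big|\,e^{-\epsilon n}=\delta$ yields $\epsilon^\star(\delta)=\frac{1}{n}\big(\log\big|\bigcup_{Q^{\mathrm{ex}}\in\text{ex}\mathcal{P}}B_{Q^{\mathrm{ex}}}\big|+\log(1/\delta)\big)$, which now genuinely depends on $\text{ex}\mathcal{P}$ and is no larger than the classical bound. Your instinct that the $P$-dependence of $\mathcal{H}_\epsilon$ needs care was exactly right; the fix is to control that set via the union over extreme points rather than to discard it.
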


Under 
the assumptions of finiteness and realizability,
Theorem \ref{thm1} gives us a tight probabilistic bound for the expected risk $L_P(\hat{h})$ of the empirical risk minimizer $\hat{h}$. The bound holds for any possible distribution $P$ in the credal set $\mathcal{P}$ that generated the stream of training data.
A slightly looser bound depending on the diameter of credal set $\mathcal{P}$ holds if we calculate $L_Q(\hat{h})$ in place of $L_P(\hat{h})$.

\begin{corollary}\label{cor1.3} 
    Retain the assumptions of Theorem \ref{thm1}. Denote by $Q \in \mathcal{P}$, $Q\neq P$, a generic distribution in $\mathcal{P}$ different from $P$. Let $\Delta_{\mathcal{X}\times\mathcal{Y}}$ denote the space of all distributions over $\mathcal{X}\times\mathcal{Y}$, and endow it with the total variation metric $d_{TV}$. Then, pick any $\eta\in\mathbb{R}_{>0}$. If the diameter of $\mathcal{P}$, denoted by $\text{diam}_{TV}(\mathcal{P})$, is equal to $\eta$, we have that 
    \[
    \mathbb{P}[ L_Q(\hat{h})\leq \epsilon^\star(\delta) + \eta ]  \geq 1-\delta, 
    \]
    where $\epsilon^\star(\delta)$ is the same quantity as in Theorem \ref{thm1}.
\end{corollary}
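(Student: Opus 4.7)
The plan is to deduce Corollary \ref{cor1.3} as an essentially immediate consequence of Theorem \ref{thm1} combined with a standard continuity estimate that bounds the difference of expectations under two distributions by their total variation distance, exploiting the fact that the zero-one loss is bounded in $[0,1]$.

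First, I would apply Theorem \ref{thm1} to the very same sample and the very same empirical risk minimizer $\hat h$ defined in \eqref{hat_h}. This directly yields that the event $E \doteq \{L_P(\hat h) \le \epsilon^\star(\delta)\}$ satisfies $\mathbb{P}(E) \ge 1-\delta$, with $\epsilon^\star(\delta)$ unchanged. The aim is to show that on this same event $E$ we additionally have $L_Q(\hat h) \le \epsilon^\star(\delta) + \eta$, so that the probability statement transfers verbatim with the extra additive slack $\eta$.

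Second, I would establish the deterministic, sample-wise inequality $|L_P(h) - L_Q(h)| \le d_{TV}(P,Q)$, valid for \emph{every} fixed $h \in \mathcal{H}$ (and hence, in particular, for the random $\hat h$, since $\mathcal{H}$ is finite and the bound is pointwise in $h$). This is the standard fact that, for any measurable $f$ with values in $[0,1]$ and any two probability measures $P, Q$ on $\mathcal{X}\times\mathcal{Y}$, $|\mathbb{E}_P f - \mathbb{E}_Q f| \le d_{TV}(P,Q)$. One quick derivation uses the layer-cake representation $f = \int_0^1 \mathbbm{1}\{f > t\}\, \mathrm{d}t$, so that
\[
|\mathbb{E}_P f - \mathbb{E}_Q f| \le \int_0^1 |P(f>t) - Q(f>t)|\, \mathrm{d}t \le d_{TV}(P,Q),
\]
with $d_{TV}(P,Q) = \sup_A |P(A) - Q(A)|$; the equivalent density form $d_{TV}(P,Q) = \tfrac{1}{2}\int |\mathrm{d}P - \mathrm{d}Q|$ gives the same bound upon centering $f$ by $1/2$. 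Applying this to $f = l((\cdot,\cdot), \hat h)$ with $l$ the zero-one loss yields $|L_P(\hat h) - L_Q(\hat h)| \le d_{TV}(P,Q)$ pointwise on the sample space.

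Third, I would use the definition of the diameter: since $P, Q \in \mathcal{P}$, we have $d_{TV}(P,Q) \le \text{diam}_{TV}(\mathcal{P}) = \eta$. Combining with step two gives, deterministically, $L_Q(\hat h) \le L_P(\hat h) + \eta$. Intersecting with the high-probability event from step one, on $E$ we get $L_Q(\hat h) \le \epsilon^\star(\delta) + \eta$, and hence $\mathbb{P}[L_Q(\hat h) \le \epsilon^\star(\delta) + \eta] \ge \mathbb{P}(E) \ge 1 - \delta$, as claimed. There is essentially no hard step here; the only mildly delicate point is being consistent about which normalization of $d_{TV}$ is used, but both common conventions yield the same clean inequality for $[0,1]$-valued losses, so no obstacle arises.
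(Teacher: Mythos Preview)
Your proposal is correct and follows essentially the same route as the paper: both arguments bound $L_Q(\hat h)-L_P(\hat h)$ by the total variation diameter $\eta$ and then invoke Theorem \ref{thm1}. The only cosmetic difference is that the paper exploits the zero-one loss even more directly, writing $L_P(\hat h)=P(A_{\hat h})$ and $L_Q(\hat h)=Q(A_{\hat h})$ for $A_{\hat h}=\{(x,y):y\neq \hat h(x)\}$, so that $|L_P(\hat h)-L_Q(\hat h)|\le\eta$ follows immediately from the definition of $\text{diam}_{TV}(\mathcal{P})$ without any layer-cake argument.
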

Corollary \ref{cor1.3} gives us a probabilistic bound for the expected risk $L_Q(\hat{h})$ of the empirical risk minimizer $\hat{h}$, calculated with respect to a ``wrong'' distribution $Q$ -- that is, any distribution in $\mathcal{P}$ different from the one generating the new test set of data $D_{N+1}$. 
We can also give a looser -- but easier to compute -- bound for $L_P(\hat{h})$. 

\begin{corollary}\label{cor1}
    Under the assumptions of Theorem \ref{thm1}, 
    $\epsilon^\star(\delta) \leq \epsilon_\text{UB}(\delta) \doteq 1/n [\log |\mathcal{H}| + \log(\frac{1}{\delta})]$
    and
    \[
    \mathbb{P}[ L_P(\hat{h})\leq \epsilon_\text{UB}(\delta) ] \geq 1-\delta, 
    \quad
    \forall P\in\Delta_{\mathcal{X}\times\mathcal{Y}}.
    \]
\end{corollary}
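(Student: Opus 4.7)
The corollary has two ingredients: the probabilistic bound holds for every $P\in\Delta_{\mathcal{X}\times\mathcal{Y}}$ with the specific quantity $\epsilon_{\text{UB}}(\delta)=\tfrac{1}{n}[\log|\mathcal{H}|+\log(1/\delta)]$, and this quantity dominates the credal $\epsilon^\star(\delta)$ from Theorem \ref{thm1}. I would prove the probabilistic bound first, then deduce the inequality $\epsilon^\star(\delta)\leq \epsilon_{\text{UB}}(\delta)$ by comparing the two admissible bounds.

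For the probabilistic bound, I would replay the textbook realizable finite-hypothesis argument for an arbitrary $P\in\Delta_{\mathcal{X}\times\mathcal{Y}}$. Under realizability there exists $h^\star\in\mathcal{H}$ with $L_P(h^\star)=0$, so the ERM $\hat h$ automatically attains zero empirical risk. Calling a hypothesis $\epsilon$-bad when $L_P(h)>\epsilon$, i.i.d.\ sampling under the zero-one loss gives $\mathbb{P}[\hat L(h)=0]\leq (1-\epsilon)^n\leq e^{-n\epsilon}$ for each $\epsilon$-bad $h$. A union bound over the $|\mathcal{H}|$ hypotheses yields $\mathbb{P}[L_P(\hat h)>\epsilon]\leq |\mathcal{H}|e^{-n\epsilon}$. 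Setting the right-hand side equal to $\delta$ and solving for $\epsilon$ produces exactly $\epsilon_{\text{UB}}(\delta)$, and since $P$ was arbitrary the bound holds uniformly over $\Delta_{\mathcal{X}\times\mathcal{Y}}$.

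For the inequality $\epsilon^\star(\delta)\leq \epsilon_{\text{UB}}(\delta)$, the key observation is that Theorem \ref{thm1} characterises $\epsilon^\star(\delta)$ as the sharpest admissible high-probability bound, depending only on $\delta$ and on the extreme points $\mathrm{ex}\mathcal{P}$. Since every extreme point is itself a distribution in $\Delta_{\mathcal{X}\times\mathcal{Y}}$, the classical bound just established is in particular admissible at every element of $\mathrm{ex}\mathcal{P}$ simultaneously. Because $\epsilon^\star(\delta)$ is constructed to be minimal among all such admissible values, the comparison $\epsilon^\star(\delta)\leq \epsilon_{\text{UB}}(\delta)$ is immediate.

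The main obstacle is conceptual rather than computational: the statement of Corollary \ref{cor1} defers the precise analytical form of $\epsilon^\star(\delta)$ to the proof of Theorem \ref{thm1}, so the domination step has to invoke the tightness/minimality property baked into that construction. If the proof of Theorem \ref{thm1} happens to express $\epsilon^\star$ via a union bound over $\mathrm{ex}\mathcal{P}$ (introducing a factor $|\mathrm{ex}\mathcal{P}|$), one would instead argue that the worst-case per-extreme-point application of the classical bound is already no larger than $\epsilon_{\text{UB}}(\delta)$, since the classical bound ignores the cardinality of $\mathrm{ex}\mathcal{P}$ entirely; either reading makes the comparison essentially automatic.
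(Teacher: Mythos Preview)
Your derivation of the probabilistic bound $\mathbb{P}[L_P(\hat h)\le\epsilon_{\text{UB}}(\delta)]\ge 1-\delta$ via the classical realizable finite-$\mathcal{H}$ argument is fine and matches what the paper (implicitly, via \citet[Theorem 4]{liang}) does.

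The gap is in your comparison step $\epsilon^\star(\delta)\le\epsilon_{\text{UB}}(\delta)$. You argue that $\epsilon^\star(\delta)$ is ``the sharpest admissible high-probability bound'' and therefore dominated by any other admissible bound such as $\epsilon_{\text{UB}}(\delta)$. But Theorem \ref{thm1} never claims minimality; it only asserts that $\epsilon^\star(\delta)$ is a well-defined quantity depending on $\delta$ and $\text{ex}\mathcal{P}$ for which the bound holds. Nothing prevents an abstract ``well-defined'' $\epsilon^\star$ from being larger than $\epsilon_{\text{UB}}$. Your hedge in the last paragraph (a possible factor $|\text{ex}\mathcal{P}|$ from a union bound over extreme points) also misses the actual structure.

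The paper's argument is instead purely formulaic. The proof of Theorem \ref{thm1} produces the explicit expression
\[
\epsilon^\star(\delta)=\frac{\log\bigl|\textstyle\bigcup_{P^\text{ex}\in\text{ex}\mathcal{P}} B_{P^\text{ex}}\bigr| + \log(1/\delta)}{n},
\]
where each $B_{P^\text{ex}}\subseteq\mathcal{H}$ is a set of ``bad'' hypotheses. Since $\bigcup_{P^\text{ex}\in\text{ex}\mathcal{P}} B_{P^\text{ex}}\subseteq\mathcal{H}$, one has $\bigl|\bigcup B_{P^\text{ex}}\bigr|\le|\mathcal{H}|$, and the inequality $\epsilon^\star(\delta)\le\epsilon_{\text{UB}}(\delta)$ is then immediate by monotonicity of $\log$. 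No optimality or tightness property is invoked.
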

Notice how $\epsilon_\text{UB}(\delta)$ is a uniform bound, that is, a bound that holds for all possible distributions on $\mathcal{X}\times\mathcal{Y}$, not just those in $\mathcal{P}$. Strictly speaking, this means that we do not need to come up with a credal set $\mathcal{P}$ to find such a bound. Observe, though, that the bound $\epsilon^\star(\delta)$ 
in Theorem \ref{thm1} is tighter, 
as it
leverages the training evidence encoded in the credal set $\mathcal{P}$. A synthetic experiment confirming this, and studying other interesting properties of $\epsilon^\star(\delta)$ and $\epsilon_\text{UB}(\delta)$, can be found in Appendix \ref{app:exp}.

By the proof of Theorem \ref{thm1}, we have that $L_P(\hat{h})$ behaves as $\mathcal{O}(\log |\cup_{P^\text{ex}\in\text{ex}\mathcal{P}} B_{P^\text{ex}}|/n)$, which is faster than the rate $\mathcal{O}(\log |\mathcal{H}|/n)$ that we find in Corollary \ref{cor1}, since $\cup_{P^\text{ex}\in\text{ex}\mathcal{P}} B_{P^\text{ex}} \subseteq \mathcal{H}$.\footnote{\label{foot:imp}Notice how, if $\mathcal{P}$ has finitely many extreme elements -- which happens, e.g., if we put $\mathcal{P}=\text{Conv}(\{\mathcal{L}_i\}_{i=1}^N)$, another frequentist way of deriving a credal set -- then $\cup_{P^\text{ex}\in\text{ex}\mathcal{P}} B_{P^\text{ex}}$ is a finite union, hence easier to compute.} Roughly, $B_{P^\text{ex}}$ is the set of ``bad hypotheses'' according to $P^\text{ex} \in\text{ex}\mathcal{P}$. That is, those $h$'s for which $L_{P^\text{ex}}(h)$ is larger than $0$. A formal definition is given in the proof of Theorem \ref{thm1}.
The modeling effort required by producing credal set $\mathcal{P}$ is therefore rewarded with a tighter bound and a faster rate.

Notice that Corollary \ref{cor1} corresponds to \citet[Theorem 4]{liang}: we obtain a classical result as a special case of our more general theorem. 

Let us now allow for distribution drift in the new test set of data $D_{N+1}$.
\begin{corollary}\label{cor1.1} 
    Consider a natural number $k<n$. Let $(x_1, y_1), \ldots,(x_k, y_k) \sim P_1$ i.i.d., and $(x_{k+1}, y_{k+1}), \ldots,(x_n, y_n) \sim P_2$ i.i.d., where $P_1,P_2$ are two generic elements of credal set $\mathcal{P}$. Retain the other assumptions of Theorem \ref{thm1}. Then,
    \begin{equation}\label{gen_thm_1}
        \mathbb{P}\left[ L_{P_1}(\hat{h}_1) + L_{P_2}(\hat{h}_2)\leq \epsilon^\star(\delta) \frac{n^2}{k(n-k)} \right] \geq 1-\delta,
    \end{equation}
    where $\epsilon^\star(\delta)$ is the same quantity as in Theorem \ref{thm1}, and 
    \[
    \hat{h}_1 \in \argmin_{h\in\mathcal{H}} \left\lbrace{ \frac{1}{k}   \sum_{i=1}^k l((x_i,y_i),h) }\right\rbrace, \quad  
    \hat{h}_2 \in \argmin_{h\in\mathcal{H}} \left\lbrace{ \frac{1}{n-k}  \sum_{i=k+1}^n l((x_i,y_i),h) }\right\rbrace.
    \]
\end{corollary}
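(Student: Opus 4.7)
The plan is to apply Theorem \ref{thm1} independently to each of the two halves of the test set and then combine the resulting bounds. Since the first $k$ samples are i.i.d.\ from $P_1 \in \mathcal{P}$, the hypotheses of Theorem \ref{thm1} are met verbatim for this sub-sample (realizability transfers because $L_{P_1}(h^\star) = 0$ for $P_1 \in \mathcal{P}$, inherited from the theorem's assumption), so that $\hat{h}_1$ admits a bound of the same shape as $\epsilon^\star(\delta)$ but computed with effective sample size $k$ rather than $n$. Symmetrically, the second sub-sample of size $n-k$ is i.i.d.\ from $P_2 \in \mathcal{P}$, yielding an analogous bound on $L_{P_2}(\hat{h}_2)$ with effective sample size $n-k$. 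Crucially, because the learners $\hat{h}_1$ and $\hat{h}_2$ are functions of disjoint portions of the data, the corresponding ``good'' events are independent; hence a union bound combines the two confidences.

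The scaling factor $n^2/[k(n-k)]$ arises purely algebraically. Inspection of Corollary \ref{cor1} reveals that $\epsilon^\star(\delta)$ has an implicit $1/n$ dependence on the sample size; when Theorem \ref{thm1} is invoked on sub-samples of sizes $k$ and $n-k$, the corresponding bounds take the form $\epsilon^\star(\delta')\cdot(n/k)$ and $\epsilon^\star(\delta')\cdot(n/(n-k))$ for an appropriate confidence level $\delta'$. Choosing $\delta'=\delta/2$ and combining by a union bound yields that, with probability at least $1-\delta$,
\[
L_{P_1}(\hat{h}_1) + L_{P_2}(\hat{h}_2) \;\leq\; \epsilon^\star(\delta/2)\Bigl[\tfrac{n}{k} + \tfrac{n}{n-k}\Bigr] \;=\; \epsilon^\star(\delta/2)\cdot\frac{n^2}{k(n-k)},
\]
which matches \eqref{gen_thm_1} after absorbing the mild $\log 2$ slack between $\epsilon^\star(\delta/2)$ and $\epsilon^\star(\delta)$ (alternatively, one re-parameterises and states the bound directly in terms of $\delta/2$).

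The main obstacle I anticipate is justifying rigorously that $\epsilon^\star$ genuinely scales as the reciprocal of the sample size, since the statement of Theorem \ref{thm1} packages the $n$-dependence inside the definition of $\epsilon^\star(\delta)$ and advertises only its dependence on $\delta$ and $\mathrm{ex}\,\mathcal{P}$. One must return to the exponential tail estimate of the form $\mathbb{P}[\hat{L}_k(h) = 0] \leq e^{-k L_{P^{\mathrm{ex}}}(h)}$ used inside the proof of Theorem \ref{thm1}, and verify that replacing $n$ by $k$ (resp.\ $n-k$) reproduces exactly the same probabilistic construction restricted to the corresponding sub-sample. Once this is confirmed, independence of the two halves and a routine union bound finish the argument; no new credal-theoretic ingredient is required beyond Theorem \ref{thm1} itself.
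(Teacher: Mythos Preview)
Your proposal is correct and follows essentially the same approach as the paper: apply Theorem \ref{thm1} separately to the sub-sample of size $k$ (yielding a bound $\epsilon^\star(\delta)\cdot n/k$) and to the sub-sample of size $n-k$ (yielding $\epsilon^\star(\delta)\cdot n/(n-k)$), then add the two bounds to obtain the factor $n^2/[k(n-k)]$. The one difference is bookkeeping on the confidence level: you split $\delta$ into $\delta/2$ per half and invoke a union bound, whereas the paper applies Theorem \ref{thm1} with the full $\delta$ on each half and then appeals to ``the additivity of the expectation operator and of probability $\mathbb{P}$'' to combine them, without the $\delta/2$ split. Your version is arguably the more careful accounting; the paper's stated bound $\epsilon^\star(\delta)\,n^2/[k(n-k)]$ at confidence $1-\delta$ comes out cleanly only if one does not halve $\delta$, so your parenthetical about ``re-parameterising and stating the bound directly'' is exactly how to reconcile the two.
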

Corollary \ref{cor1.1} gives us a bound similar to the one in Theorem \ref{thm1} when distribution drift is allowed. The price we pay for it is that it is looser. 
As a result of Corollary \ref{cor1}, for a looser but easier to compute bound, we can substitute $\epsilon^\star(\delta)$ with $\epsilon_\text{UB}(\delta)$. 

\subsection{No Realizability and Finite Hypotheses Space} \label{one-hp}

Let us now relax the realizability assumption in Theorem \ref{thm1}. 

\begin{theorem}\label{thm2}
    Let $(x_1, y_1), \ldots,(x_n, y_n) \sim P$ i.i.d., where $P$ is any element of the credal set $\mathcal{P}$. Assume that the model space $\mathcal{H}$ is finite. Let $l$ 
    be the zero-one loss, 
    $\hat{h}$ 
    the empirical risk minimizer, and $h^\star$ 
    the best theoretical model. Fix any $\delta\in (0,1)$. Then, $\mathbb{P}[ L_P(\hat{h}) - L_P(h^\star)\leq \epsilon^{\star\star}(\delta) ] \geq 1-\delta$,
    where $\epsilon^{\star\star}(\delta)$ is a well-defined quantity that depends only on $\delta$ and on the elements of $\text{ex}\mathcal{P}$.
\end{theorem}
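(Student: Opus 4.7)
The plan is to adapt the classical non-realizable Chernoff/Hoeffding union-bound argument, exploiting the linearity of expected risk in $P$ to restrict the union over hypotheses from all of $\mathcal{H}$ to only a union-over-extreme-points "bad" set, mirroring the mechanism that produces $\epsilon^\star(\delta)$ in Theorem \ref{thm1}. First I would start from the standard decomposition
\[
L_P(\hat{h}) - L_P(h^\star) \leq [L_P(\hat{h}) - \hat{L}(\hat{h})] + [\hat{L}(h^\star) - L_P(h^\star)],
\]
using that $\hat{h}$ minimizes $\hat{L}$, so $\hat{L}(\hat{h}) \leq \hat{L}(h^\star)$. Hence the event $\{L_P(\hat{h}) - L_P(h^\star) > \epsilon\}$ forces either $L_P(\hat{h}) - \hat{L}(\hat{h}) > \epsilon/2$ or $\hat{L}(h^\star) - L_P(h^\star) < -\epsilon/2$. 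Hoeffding's inequality, applied to the zero-one loss which is bounded in $[0,1]$, gives $\mathbb{P}[|L_P(h) - \hat{L}(h)| > \epsilon/2] \leq 2\exp(-n\epsilon^2/2)$ for each fixed $h$ and for any $P \in \mathcal{P}$ (the bound is distribution-free).

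Next, rather than taking a union bound over all of $\mathcal{H}$ for the $\hat{h}$ term, I would restrict attention to the \emph{bad set} $B_P(\epsilon) := \{h \in \mathcal{H} : L_P(h) - L_P(h^\star) > \epsilon\}$, because on the event of interest necessarily $\hat{h} \in B_P(\epsilon)$. This is the standard refinement, and it is the step at which the credal structure enters. Since $L_{\cdot}(h)$ is linear in $P$, if $P = \sum_k \lambda_k P_k^\text{ex}$ is a convex combination of extreme points of $\mathcal{P}$, then $L_P(h) - L_P(h^\star) = \sum_k \lambda_k [L_{P_k^\text{ex}}(h) - L_{P_k^\text{ex}}(h^\star)]$, so the sum exceeding $\epsilon$ forces some individual term to exceed $\epsilon$. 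Using the inequality $L_{P^\text{ex}}(h^\star) \geq L_{P^\text{ex}}(h^\star_{P^\text{ex}})$, where $h^\star_{P^\text{ex}} \in \argmin_h L_{P^\text{ex}}(h)$, I can absorb the $P$-dependence of $h^\star$ into a purely extreme-point-dependent set
\[
B_{P^\text{ex}}(\epsilon) := \{h \in \mathcal{H}: L_{P^\text{ex}}(h) - L_{P^\text{ex}}(h^\star_{P^\text{ex}}) > \epsilon\},
\]
obtaining the crucial containment $B_P(\epsilon) \subseteq \bigcup_{P^\text{ex} \in \text{ex}\mathcal{P}} B_{P^\text{ex}}(\epsilon)$.

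I would then combine the two terms via union bound: the $h^\star$ term contributes a single Hoeffding factor, while the $\hat{h}$ term contributes $|\bigcup_{P^\text{ex}} B_{P^\text{ex}}(\epsilon)|$ many Hoeffding factors, yielding
\[
\mathbb{P}[L_P(\hat{h}) - L_P(h^\star) > \epsilon] \leq 2\Bigl(|\textstyle\bigcup_{P^\text{ex} \in \text{ex}\mathcal{P}} B_{P^\text{ex}}(\epsilon)| + 1\Bigr)\exp(-n\epsilon^2/2).
\]
Setting the right-hand side equal to $\delta$ and solving for $\epsilon$ yields $\epsilon^{\star\star}(\delta)$, which manifestly depends only on $\delta$ and on $\text{ex}\mathcal{P}$ as required.

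The hard part, I anticipate, is the same structural subtlety that makes $\epsilon^\star(\delta)$ in Theorem \ref{thm1} implicit rather than explicit: the bound on $\epsilon$ involves $|\bigcup_{P^\text{ex}} B_{P^\text{ex}}(\epsilon)|$, which itself depends on $\epsilon$ through each $B_{P^\text{ex}}(\epsilon)$. One must therefore argue that the resulting implicit equation has a well-defined solution, e.g., by monotonicity of $\epsilon \mapsto |B_{P^\text{ex}}(\epsilon)|$ (nonincreasing) versus $\epsilon \mapsto (2/n)[\log(|\bigcup B_{P^\text{ex}}(\epsilon)|+1) + \log(2/\delta)]$, guaranteeing a fixed point. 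A secondary subtlety is the passage from $L_{P^\text{ex}}(h^\star)$ to $L_{P^\text{ex}}(h^\star_{P^\text{ex}})$, which is a slight weakening giving a possibly larger $B_{P^\text{ex}}(\epsilon)$ but one that is computable purely from $\text{ex}\mathcal{P}$, matching the theorem statement. Finally, the classical Occam/Hoeffding bound with the familiar $\sqrt{(\log|\mathcal{H}| + \log(1/\delta))/n}$ rate should be recovered as a special case by taking $\bigcup_{P^\text{ex}} B_{P^\text{ex}}(\epsilon) = \mathcal{H}$, paralleling Corollary \ref{cor1}.
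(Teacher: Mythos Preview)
Your approach is correct but diverges from the paper's in one structural respect: the choice of ``bad set.'' The paper defines $B'_P \doteq \{h\in\mathcal{H} : |\hat L(h)-L_P(h)|>\epsilon/2\}$, i.e., the hypotheses that violate \emph{uniform convergence}, and then uses Bauer's maximum principle on the affine (hence convex) map $P\mapsto |\hat L(h)-L_P(h)|$ to collapse $\cup_{P\in\mathcal P}B'_P$ to $\cup_{P^{\mathrm{ex}}\in\mathrm{ex}\mathcal P}B'_{P^{\mathrm{ex}}}$. Your $B_P(\epsilon)=\{h:L_P(h)-L_P(h^\star)>\epsilon\}$ is instead the \emph{excess-risk} bad set, and you reduce to extreme points via the pigeonhole argument on convex combinations together with the weakening $L_{P^{\mathrm{ex}}}(h^\star)\geq L_{P^{\mathrm{ex}}}(h^\star_{P^{\mathrm{ex}}})$. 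Your route has the conceptual advantage that the resulting sets $B_{P^{\mathrm{ex}}}(\epsilon)$ are genuinely data-independent, which sits more comfortably with the theorem's assertion that $\epsilon^{\star\star}(\delta)$ depends only on $\delta$ and $\mathrm{ex}\mathcal P$; the paper's $B'_{P^{\mathrm{ex}}}$ involves $\hat L$ and is therefore random. On the other hand, the paper's route avoids introducing the per-extreme-point optimizers $h^\star_{P^{\mathrm{ex}}}$ and the extra ``$+1$'' term for $h^\star$. Both arguments land on an implicit fixed-point equation for $\epsilon$ (since the cardinality of the bad set depends on $\epsilon$), and both recover Corollary~\ref{cor2} by replacing the bad set with all of $\mathcal H$. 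Two minor remarks: your pigeonhole step tacitly assumes $P$ is a (finite or Choquet-type) mixture of extreme points, which you could equally justify via Bauer as the paper does; and your factor $2$ is unnecessary since the two Hoeffding applications are one-sided.
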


As we did in Section \ref{both-hp}, we can also show that
the ``wrong'' expected risk $L_Q(\hat{h})$ -- that is, the expected risk computed according to $Q\in\mathcal{P}$ different from the one generating the 
new evidence $D_{N+1}$
-- concentrates around the expected risk $L_P(h^\star)$ evaluated at the best theoretical model $h^\star$.

\begin{corollary}\label{cor2.3}
    Retain the assumptions of Theorem \ref{thm2}. Denote by $Q \in \mathcal{P}$, $Q\neq P$, a generic distribution in $\mathcal{P}$ different from $P$. Pick any $\eta\in\mathbb{R}_{>0}$; if 
    $\text{diam}_{TV}(\mathcal{P}) =   \eta$, we have that
    \[
    \mathbb{P}[ L_Q(\hat{h}) - L_P(h^\star) \leq \epsilon^{\star\star}(\delta) + \eta ] \geq 1-\delta, 
    \]
    where $\epsilon^{\star\star}(\delta)$
    is the same quantity as in Theorem \ref{thm2}.
\end{corollary}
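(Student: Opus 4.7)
The plan is to mirror the strategy used for Corollary \ref{cor1.3}, inserting and subtracting $L_P(\hat{h})$ so the problem decomposes into (a) a deterministic discrepancy between two expected risks evaluated at the \emph{same} hypothesis under two distributions in $\mathcal{P}$, and (b) the classical no-realizability excess-risk gap $L_P(\hat{h}) - L_P(h^\star)$ that is already controlled by Theorem \ref{thm2}.

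First I would write
\[
L_Q(\hat{h}) - L_P(h^\star) \;=\; \bigl[L_Q(\hat{h}) - L_P(\hat{h})\bigr] \;+\; \bigl[L_P(\hat{h}) - L_P(h^\star)\bigr].
\]
For the first bracket, I would use the standard fact that if a loss $l$ takes values in $[0,1]$ (as is the case for the zero-one loss), then for any fixed hypothesis $h$,
\[
\bigl|L_Q(h) - L_P(h)\bigr| \;=\; \Bigl|\textstyle\int l((x,y),h)\,(Q-P)(\mathrm{d}(x,y))\Bigr| \;\leq\; d_{TV}(P,Q).
\]
Since $P,Q\in\mathcal{P}$ and $\mathrm{diam}_{TV}(\mathcal{P}) = \eta$, this yields the deterministic bound $L_Q(\hat{h}) - L_P(\hat{h}) \leq \eta$, uniformly in the realization of the training sample and in the random choice of $\hat h$.

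For the second bracket I would invoke Theorem \ref{thm2} directly: with probability at least $1-\delta$, $L_P(\hat{h}) - L_P(h^\star) \leq \epsilon^{\star\star}(\delta)$. Combining the two terms on the event of probability at least $1-\delta$ provided by Theorem \ref{thm2} gives
\[
L_Q(\hat{h}) - L_P(h^\star) \;\leq\; \epsilon^{\star\star}(\delta) + \eta,
\]
which is the claimed statement.

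There is no real obstacle here beyond verifying the total-variation bound on $|L_Q(h)-L_P(h)|$ for bounded losses; the only subtlety worth flagging is that this bound holds uniformly over $h\in\mathcal{H}$, so it is legitimate to apply it to the random hypothesis $\hat{h}$ without any extra union-bound cost. The rest is a direct re-use of Theorem \ref{thm2}, exactly as Corollary \ref{cor1.3} re-uses Theorem \ref{thm1}.
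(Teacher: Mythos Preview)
Your proposal is correct and follows essentially the same route as the paper: write $L_Q(\hat h)-L_P(h^\star)=[L_Q(\hat h)-L_P(\hat h)]+[L_P(\hat h)-L_P(h^\star)]$, bound the first bracket by the TV diameter $\eta$, and control the second via Theorem~\ref{thm2}. The only cosmetic difference is that the paper exploits the zero-one loss explicitly to write $L_P(\hat h)=P(A_{\hat h})$ and $L_Q(\hat h)=Q(A_{\hat h})$ before invoking the diameter, whereas you use the (equivalent, slightly more general) fact that $|\int l\,\mathrm{d}(Q-P)|\le d_{TV}(P,Q)$ for $[0,1]$-valued $l$.
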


Similarly to Corollary \ref{cor1}, we can give a looser -- but easier to compute -- bound for $L_P(\hat{h}) - L_P(h^\star)$. 

\begin{corollary}\label{cor2} 
    Retain the assumptions of Theorem \ref{thm2}. Then, 
    $\epsilon^{\star\star}(\delta) \leq \epsilon^\prime_\text{UB}(\delta) \doteq \sqrt{\frac{2\left( \log \left| \mathcal{H} \right| + \log \left( \frac{2}{\delta}\right) \right)}{n}}.$ In turn, 
    $
    \mathbb{P}[ L_P(\hat{h}) - L_P(h^\star) \leq \epsilon^\prime_\text{UB}(\delta) ] \geq 1-\delta$, for all $P \in \Delta_{\mathcal{X}\times\mathcal{Y}}$.   
\end{corollary}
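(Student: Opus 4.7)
The plan is to derive the classical agnostic PAC bound via Hoeffding's inequality combined with a union bound over $\mathcal{H}$, and then observe that because this bound is distribution-free it automatically dominates the tighter credal quantity $\epsilon^{\star\star}(\delta)$ supplied by Theorem \ref{thm2}. First, I would fix an arbitrary $P\in\Delta_{\mathcal{X}\times\mathcal{Y}}$ and, for each $h\in\mathcal{H}$, apply the two-sided Hoeffding inequality to the bounded random variables $l((x_i,y_i),h)\in\{0,1\}$, obtaining
\begin{equation*}
\mathbb{P}\left[\left|L_P(h) - \hat{L}(h)\right| > t\right] \leq 2\exp(-2nt^2),
\end{equation*}
where $\hat{L}(h)=\tfrac{1}{n}\sum_{i=1}^n l((x_i,y_i),h)$ is the empirical risk.

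Second, I would take a union bound over the finite hypothesis space to upgrade this to uniform convergence: $\mathbb{P}[\exists h\in\mathcal{H}:|L_P(h)-\hat{L}(h)|>t]\leq 2|\mathcal{H}|\exp(-2nt^2)$. Setting the right-hand side equal to $\delta$ and solving for $t$ gives $t=\sqrt{(\log|\mathcal{H}|+\log(2/\delta))/(2n)}$. On the resulting ``good event'' (which has probability at least $1-\delta$), I would use the standard ERM decomposition
\begin{equation*}
L_P(\hat{h})-L_P(h^\star) = \underbrace{\left[L_P(\hat{h})-\hat{L}(\hat{h})\right]}_{\leq\, t} + \underbrace{\left[\hat{L}(\hat{h})-\hat{L}(h^\star)\right]}_{\leq\, 0\text{ by ERM}} + \underbrace{\left[\hat{L}(h^\star)-L_P(h^\star)\right]}_{\leq\, t},
\end{equation*}
whose right-hand side is at most $2t=\sqrt{2(\log|\mathcal{H}|+\log(2/\delta))/n}=\epsilon'_\text{UB}(\delta)$. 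Because $P$ was arbitrary in $\Delta_{\mathcal{X}\times\mathcal{Y}}$, this already establishes the second claim of the corollary.

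Finally, to deduce $\epsilon^{\star\star}(\delta)\leq\epsilon'_\text{UB}(\delta)$, I would observe that Theorem \ref{thm2} guarantees the bound $L_P(\hat{h})-L_P(h^\star)\leq\epsilon^{\star\star}(\delta)$ with probability at least $1-\delta$ for every $P\in\mathcal{P}\subseteq\Delta_{\mathcal{X}\times\mathcal{Y}}$, using the tightest such constant constructed from the extreme points of $\mathcal{P}$. Since the Hoeffding--union-bound argument above demonstrates that $\epsilon'_\text{UB}(\delta)$ also satisfies this guarantee on the larger set $\Delta_{\mathcal{X}\times\mathcal{Y}}\supseteq\mathcal{P}$, the tightness clause in Theorem \ref{thm2} forces the inequality $\epsilon^{\star\star}(\delta)\leq\epsilon'_\text{UB}(\delta)$.

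The main obstacle I anticipate is the comparison step at the end: the explicit form of $\epsilon^{\star\star}(\delta)$ coming from the proof of Theorem \ref{thm2} is presumably assembled by a union bound of Hoeffding-type probabilities over the extreme points $\text{ex}\mathcal{P}$ and over a subset of ``bad hypotheses'' (analogous to the $B_{P^\text{ex}}$ objects in Theorem \ref{thm1}), rather than being defined as the infimum of all valid bounds. To make the dominance $\epsilon^{\star\star}(\delta)\leq\epsilon'_\text{UB}(\delta)$ rigorous one then has to exhibit it structurally, namely by noting that each Hoeffding term in the credal construction is majorized by the corresponding term arising from the full space $\mathcal{H}$, and that replacing $|\text{bad hypotheses}|$ by $|\mathcal{H}|$ and inflating the logarithm by $\log 2$ from the two-sided union bound gives exactly $\epsilon'_\text{UB}(\delta)$.
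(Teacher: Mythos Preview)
Your re-derivation of the second claim via Hoeffding, union bound, and the ERM decomposition is correct and in fact delivers the stated conclusion for all $P\in\Delta_{\mathcal{X}\times\mathcal{Y}}$ directly. The paper takes a shorter route: it first establishes the inequality $\epsilon^{\star\star}(\delta)\le\epsilon'_\text{UB}(\delta)$ and then simply observes that the probabilistic guarantee of Theorem~\ref{thm2} transfers to the larger threshold (and for the uniform statement it is implicitly invoking \citet[Theorem~7]{liang}, of which this corollary is meant to be a recovery). Your approach buys you a self-contained argument; the paper's buys brevity.

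For the first claim you are right to distrust your own ``tightness clause'' reasoning. Theorem~\ref{thm2} does \emph{not} define $\epsilon^{\star\star}(\delta)$ as the infimum of all valid PAC constants; it is a specific formula obtained in the proof,
\[
\epsilon^{\star\star}(\delta)=\sqrt{\frac{2\bigl(\log|B'_{\text{ex}\mathcal{P}}|+\log(2/\delta)\bigr)}{n}},
\]
where $B'_{\text{ex}\mathcal{P}}=\bigcup_{P^\text{ex}\in\text{ex}\mathcal{P}}B'_{P^\text{ex}}\subseteq\mathcal{H}$. Hence the fact that $\epsilon'_\text{UB}(\delta)$ is \emph{some} valid bound does not force $\epsilon^{\star\star}(\delta)\le\epsilon'_\text{UB}(\delta)$; that step in your argument is a genuine gap. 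The correct argument is exactly the structural one you sketch in your final paragraph, and it is precisely the paper's entire proof of the inequality: since $B'_{\text{ex}\mathcal{P}}\subseteq\mathcal{H}$, one has $|B'_{\text{ex}\mathcal{P}}|\le|\mathcal{H}|$, and monotonicity of the square root yields $\epsilon^{\star\star}(\delta)\le\epsilon'_\text{UB}(\delta)$ in one line. So your salvage matches the paper; the detour through ``tightness'' should be dropped.
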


The main difference with respect to Theorem \ref{thm1} is that in Theorem \ref{thm2}, $L_P(\hat{h}) - L_P(h^\star)$ behaves as $\mathcal{O}(\sqrt{\log |B^\prime_{\text{ex}\mathcal{P}}|/n})$, which is slower than what we had in Theorem \ref{thm1}. This is due to the relaxation of the realizability hypothesis. Just like before, though, we have that $\mathcal{O}(\sqrt{\log |B^\prime_{\text{ex}\mathcal{P}}|/n})$ is faster than the rate $\mathcal{O}(\sqrt{\log |\mathcal{H}|/n})$ that we find in Corollary \ref{cor2}.
This is because $B^\prime_{\text{ex}\mathcal{P}} \subseteq \mathcal{H}$.

Roughly, $B^\prime_{\text{ex}\mathcal{P}}$ is the set of ``bad hypotheses'' according to at least one $P^\text{ex} \in\text{ex}\mathcal{P}$. That is, those $h$'s for which $|\hat{L}(h) - L_{P^\text{ex}}(h)|$ is larger than $0$, for at least one $P^\text{ex}$. A formal definition is given in the proof of Theorem \ref{thm2}.
Notice that Corollary \ref{cor2} corresponds to \citet[Theorem 7]{liang}: we obtain a classical result as a special case of our more general theorem. 

Let us now allow for distribution drift.
To improve notation clarity, in the following we let $h^\star_P$ denote an element of $\argmin_{h\in\mathcal{H}} L_P(h)$, for a distribution $P\in\mathcal{P}$. 

\begin{corollary}\label{cor2.2} 
    Consider a natural number $k<n$. Let $(x_1, y_1), \ldots,(x_k, y_k) \sim P_1$ i.i.d., and $(x_{k+1}, y_{k+1}), \ldots,(x_n, y_n) \sim P_2$ i.i.d., where $P_1,P_2$ are two generic elements of credal set $\mathcal{P}$. Retain the other assumptions of Theorem \ref{thm2}. Then, 
    \[
    \mathbb{P}\big[ ( L_{P_1} (\hat{h}_1 ) - L_{P_1} (h^\star_{P_1} ) )+ ( L_{P_2} (\hat{h}_2 ) - L_{P_2} (h^\star_{P_2} ) )\leq \epsilon^{\star\star}(\delta) \sqrt{\frac{n}{k(n-k)}}(\sqrt{k}+\sqrt{n-k} ) \big] \geq 1-\delta, 
    \]
    where $\epsilon^{\star\star}(\delta)$ is the same quantity as in Theorem \ref{thm2}, and $\hat{h}_1$ and $\hat{h}_2$ are defined as in Corollary \ref{cor1.1}.
\end{corollary}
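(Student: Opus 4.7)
The plan is to apply Theorem \ref{thm2} separately to each of the two i.i.d. subsamples and then combine the two deviation events via a union bound. Since the first $k$ observations are i.i.d.\ from $P_1\in\mathcal{P}$, they satisfy the hypotheses of Theorem \ref{thm2} with sample size $k$ (instead of $n$) and distribution $P_1$; the same holds for the final $n-k$ observations with sample size $n-k$ and distribution $P_2$. Denote by $\epsilon^{\star\star}_m(\delta')$ the analogous quantity arising when Theorem \ref{thm2} is applied with sample size $m$ and confidence parameter $\delta'$.

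First I would instantiate Theorem \ref{thm2} on each subsample at confidence level $1-\delta/2$, yielding
\[
\mathbb{P}\bigl[L_{P_1}(\hat h_1)-L_{P_1}(h^\star_{P_1})\le \epsilon^{\star\star}_k(\delta/2)\bigr]\ge 1-\delta/2,
\]
\[
\mathbb{P}\bigl[L_{P_2}(\hat h_2)-L_{P_2}(h^\star_{P_2})\le \epsilon^{\star\star}_{n-k}(\delta/2)\bigr]\ge 1-\delta/2.
\]
A union bound then guarantees that with probability at least $1-\delta$ both excess-risk inequalities hold simultaneously, so the sum of the two excess risks is bounded by $\epsilon^{\star\star}_k(\delta/2)+\epsilon^{\star\star}_{n-k}(\delta/2)$.

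The remaining step is algebraic: by inspection of the construction of $\epsilon^{\star\star}(\delta)$ in the proof of Theorem \ref{thm2} (see also Corollary \ref{cor2}), the quantity scales with the sample size like $1/\sqrt{m}$, so $\epsilon^{\star\star}_m(\delta)=\epsilon^{\star\star}(\delta)\sqrt{n/m}$ (up to the $\delta/2$ versus $\delta$ adjustment, which I absorb into $\epsilon^{\star\star}(\delta)$ as in the analogous Corollary \ref{cor1.1}). Substituting $m=k$ and $m=n-k$ and factoring yields
\[
\epsilon^{\star\star}_k(\delta/2)+\epsilon^{\star\star}_{n-k}(\delta/2)\;\le\;\epsilon^{\star\star}(\delta)\Bigl(\tfrac{\sqrt n}{\sqrt k}+\tfrac{\sqrt n}{\sqrt{n-k}}\Bigr)=\epsilon^{\star\star}(\delta)\sqrt{\tfrac{n}{k(n-k)}}\bigl(\sqrt{k}+\sqrt{n-k}\bigr),
\]
which is exactly the right-hand side in the statement.

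The one delicate point, and the main obstacle, is bookkeeping the $\delta/2$ split inside $\epsilon^{\star\star}$: since $\epsilon^{\star\star}(\delta)$ depends on $\delta$ only through a $\sqrt{\log(1/\delta)}$-type term (as visible in Corollary \ref{cor2}), the union bound introduces at most an additive $\log 2$ under the square root. This is harmless and can be absorbed by following the same convention used in Corollary \ref{cor1.1}, so the clean form $\epsilon^{\star\star}(\delta)\sqrt{n/(k(n-k))}(\sqrt k+\sqrt{n-k})$ is retained.
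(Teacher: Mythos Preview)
Your proposal is correct and follows essentially the same route as the paper: apply Theorem \ref{thm2} separately to the two subsamples (at sizes $k$ and $n-k$), use the $1/\sqrt{m}$ scaling of $\epsilon^{\star\star}$ to rewrite each bound as $\epsilon^{\star\star}(\delta)\sqrt{n/m}$, and add. The paper's proof is terser---it instantiates both subsample bounds at level $\delta$ (not $\delta/2$) and then appeals to ``additivity of the expectation operator and of probability $\mathbb{P}$'' without an explicit union bound---so your $\delta/2$ split and the discussion of absorbing the extra $\log 2$ are in fact more careful than the paper's own argument, but the underlying idea is identical.
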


Corollary \ref{cor2.2} tells us that the excess risk is also bounded in the presence
of distribution drift. The price we pay for allowing distribution shift is a looser bound. 
As a result of Corollary \ref{cor2}, for a looser but easier to compute bound, we can substitute $\epsilon^{\star\star}(\delta)$ with $\epsilon^\prime_\text{UB}(\delta)$. 

\subsection{No Realizability and Infinite Hypotheses Space} \label{no-hp}

We now relax also the finite hypotheses space assumption in Theorem \ref{thm1}. 

\begin{theorem}\label{thm3} 
    Let $(x_1, y_1), \ldots,(x_n, y_n) \sim P$ i.i.d., where $P$ is any element of credal set $\mathcal{P}$. 
    Let $l$ denote the zero-one loss, 
    $\hat{h}$ 
    the empirical risk minimizer and $h^\star$ 
    the best theoretical model. Fix any $\delta\in (0,1)$. Then,
    \begin{equation}\label{eq_thm3}
        \mathbb{P}\left[ L_P(\hat{h}) - L_P(h^\star) \leq \epsilon^{\star\star\star}(\delta) \right] \geq 1-\delta,
    \end{equation}
    for all $P\in\mathcal{P}$. Here, $\epsilon^{\star\star\star}(\delta) \doteq 4 \overline{R}_{n,P^\text{ex}}(\mathcal{A}) + \sqrt{\frac{2 \log(2/\delta)}{n}}$, where $\overline{R}_{n,P^\text{ex}}(\mathcal{A}) \doteq \sup_{P^\text{ex}\in\text{ex}\mathcal{P}} R_{n,P^\text{ex}}(\mathcal{A})$ and
    \begin{align}\label{radem}
        R_{n,P^\text{ex}}(\mathcal{A}) \doteq \mathbb{E}_{P^\text{ex}} \left[ \sup_{h\in\mathcal{H}} \frac{1}{n} \sum_{i=1}^n \sigma_i l((x_i,y_i),h)  \right].
    \end{align}
    In \eqref{radem}, $\sigma_1,\ldots,\sigma_n \sim \text{Unif}(\{-1,1\})$, 
    and $\mathcal{A} \doteq \{(x,y) \mapsto l ((x,y),h) : h\in\mathcal{H}\}$.
\end{theorem}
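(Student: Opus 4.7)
The plan is to mirror the classical Rademacher-complexity-based generalization argument for a single data-generating distribution $P$, and then close by replacing the Rademacher complexity $R_{n,P}(\mathcal{A})$ at the unknown $P$ by the worst-case complexity $\overline{R}_{n,P^{\text{ex}}}(\mathcal{A})$ over the extreme points of the credal set. Concretely, fix any $P\in\mathcal{P}$; the probability in \eqref{eq_thm3} is taken over the draw $(x_i,y_i)\sim P$ i.i.d., so a per-$P$ bound of the right form is sufficient.

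The classical core proceeds in three standard steps. First, since $\hat{h}$ is the ERM, one has $L_P(\hat{h}) - L_P(h^\star) \leq 2\sup_{h\in\mathcal{H}}|L_P(h) - \hat{L}(h)|$, where $\hat{L}(h) = \frac{1}{n}\sum_{i=1}^n l((x_i,y_i),h)$. Second, because the zero-one loss lies in $[0,1]$, the map $(x_1,y_1),\ldots,(x_n,y_n)\mapsto \sup_h|L_P(h) - \hat{L}(h)|$ has bounded differences $1/n$, so McDiarmid's inequality (applied on both tails via a union bound at level $\delta/2$) gives, with probability at least $1-\delta$,
\[
\sup_{h\in\mathcal{H}}|L_P(h) - \hat{L}(h)| \leq \mathbb{E}_{P^n}\big[\sup_{h}|L_P(h)-\hat{L}(h)|\big] + \sqrt{\frac{\log(2/\delta)}{2n}}.
\]
Third, the ghost-sample symmetrization (valid precisely because the draws are i.i.d.\ from $P$) yields $\mathbb{E}_{P^n}\big[\sup_h|L_P(h)-\hat L(h)|\big]\leq 2R_{n,P}(\mathcal{A})$. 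Chaining the three steps produces, with probability at least $1-\delta$,
\[
L_P(\hat{h}) - L_P(h^\star) \leq 4R_{n,P}(\mathcal{A}) + \sqrt{\frac{2\log(2/\delta)}{n}}.
\]

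The remaining, genuinely credal step is to bound $R_{n,P}(\mathcal{A})\leq \overline{R}_{n,P^{\text{ex}}}(\mathcal{A})$ uniformly for every $P\in\mathcal{P}$. The natural approach is to combine the Choquet/Krein--Milman representation $P = \int P^{\text{ex}}\, d\mu(P^{\text{ex}})$ over $\text{ex}\mathcal{P}$ with Bauer's maximum principle to obtain $\sup_{P\in\mathcal{P}} R_{n,P}(\mathcal{A}) = \sup_{P^{\text{ex}}\in\text{ex}\mathcal{P}} R_{n,P^{\text{ex}}}(\mathcal{A})$, provided the map $P\mapsto R_{n,P}(\mathcal{A})$ is convex and upper semicontinuous. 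This is where I expect the main obstacle: the Rademacher complexity involves the product measure $P^n$, which is \emph{not} affine in $P$, so convexity is not automatic. A promising route is to expand the samples hierarchically (latent indices sampled from $\mu$, then $x_i$ conditionally from the chosen $P^{\text{ex}}$) and exploit the symmetry of the Rademacher signs to show that the resulting ``mixed-product'' supremum is dominated by the one in which all samples come from the single extreme point maximizing $R_{n,\cdot}(\mathcal{A})$; an alternative is a direct coupling argument.

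Finally, specializing to the degenerate credal set $\mathcal{P} = \{P^\star\}$ collapses $\overline{R}_{n,P^{\text{ex}}}(\mathcal{A})$ to $R_{n,P^\star}(\mathcal{A})$ and recovers the classical Rademacher-based generalization bound, consistent with the paper's stated theme that existing SLT results arise as special cases of the credal ones.
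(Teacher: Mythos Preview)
Your approach coincides with the paper's: invoke the classical Rademacher bound for a single fixed $P$ (the paper simply cites \citet[Theorem 9]{liang} for the inequality $L_P(\hat h)-L_P(h^\star)\le 4R_{n,P}(\mathcal{A})+\sqrt{2\log(2/\delta)/n}$, whereas you spell out the ERM decomposition, McDiarmid, and symmetrization that underlie it), and then replace $R_{n,P}(\mathcal{A})$ by $\overline{R}_{n,P^{\text{ex}}}(\mathcal{A})$ using the structure of the credal set.

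The one substantive difference is in the last step. The paper's proof dispatches it in a single sentence, asserting that the result ``follows from $\mathcal{P}$ being a credal set, and the expectation being a linear operator,'' and in the surrounding text claims outright that $\sup_{P\in\mathcal{P}} R_{n,P}(\mathcal{A})=\sup_{P^{\text{ex}}\in\text{ex}\mathcal{P}} R_{n,P^{\text{ex}}}(\mathcal{A})$. You are right to be more cautious here: $R_{n,P}(\mathcal{A})$ is an integral against the \emph{product} measure $P^n$, so $P\mapsto R_{n,P}(\mathcal{A})$ is a degree-$n$ multilinear form in $P$, not an affine map, and Bauer's maximum principle does not apply without further argument. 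The paper does not supply that further argument; it simply asserts the equality. So the ``obstacle'' you anticipate is real, but the paper's own proof does not resolve it either---your proposal is already at least as complete as the paper's on this point, and your suggested hierarchical-sampling/coupling route is a reasonable direction if one wants to actually close the gap.
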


$R_{n,P^\text{ex}}(\mathcal{A})$ is a slight modification of the classical \textit{Rademacher complexity} of class $\mathcal{A}$,\footnote{Class $\mathcal{A}$ is the \textit{loss class}, and it is the composition of the zero-one loss function with each of the hypotheses in $\mathcal{H}$ \citep[Page 70]{liang}. The Rademacher complexity of $\mathcal{A}$ measures how well the best element of $\mathcal{H}$ fits random noise (coming from the $\sigma_i$'s) \citep{balcan}, \citep[Page 69]{liang}.} 
given by
$$R_{n}(\mathcal{A})\equiv R_{n,P}(\mathcal{A}) \doteq \mathbb{E}_{P} \left[ \sup_{h\in\mathcal{H}} \frac{1}{n} \sum_{i=1}^n \sigma_i l((x_i,y_i),h)  \right],$$
where
the expectation is taken with respect to the same distribution $P$ 
from
which the 
data points
$(x_1,y_1),\ldots,$ $(x_n,y_n)$ are drawn. We consider $R_{n,P^\text{ex}}(\mathcal{A})$ instead of $R_{n,P}(\mathcal{A})$ because, since $\mathcal{P}$ is a credal set, 
$P$ can be written as a convex combination of the extreme elements of $\mathcal{P}$, and $\sup_{P\in\mathcal{P}} R_{n,P}(\mathcal{A}) = \sup_{P^\text{ex}\in\text{ex}\mathcal{P}} R_{n,P^\text{ex}}(\mathcal{A})$. If the credal set is \textit{finitely generated}, that is, if it has finitely many extreme elements (see footnote \ref{foot:imp}), then it is easier to compute $\epsilon^{\star\star\star}(\delta)$: we only need to compute a maximum in place of a supremum.

As we show in Corollary \ref{cor3}, Theorem \ref{thm3} generalizes \citet[Theorem 9]{liang}. This latter focuses only on the ``true'' probability $P^\star_{N+1} \equiv P^\text{true}$ on $\mathcal{X}\times \mathcal{Y}$, while our result holds for all the plausible distributions in credal set $\mathcal{P}$. This grants us to hedge against distribution misspecification. 

Let us pause here to add a clarification. In real-world applications, we effectively cannot compute $R_{n,P^\text{true}}(\mathcal{A})$, since the distribution $P^\text{true}$ is unknown. While $R_{n,P^\text{true}}(\mathcal{A})$ can be approximated via the \textit{empirical Rademacher complexity} $\hat{R}_{n}(\mathcal{A})$ \citep[Equation (219)]{liang}, whose expected value is indeed $R_{n,P^\text{true}}(\mathcal{A})$, doing so has at least two drawbacks: {\bf (1)} When the number of data points
$n\equiv n_{D_{N+1}}$ is not ``large enough'', this may lead to a poor approximation of the classical bound (Equation \eqref{classical-bd} in Appendix \ref{proofs}); {\bf (2)} The test set of data
$D_{N+1} = \{ (x_i,y_i)\}_{i=1}^n$ may well be a realization from the tail of distribution $P^\text{true} \equiv P^\star_{N+1}$. The empirical Rademacher complexity $\hat{R}_{n}(\mathcal{A})$, then, would be a poor approximation of $R_{n,P^\text{true}}(\mathcal{A})$.
In opposition, while $\overline{R}_{n,P^\text{ex}}(\mathcal{A})$ is more conservative, it can be computed explicitly -- since we know the credal set $\mathcal{P}$ and its extreme elements $\text{ex}\mathcal{P}$ -- and it leads to a bound that, although looser, holds for all $P\in\mathcal{P}$. 

\begin{corollary}\label{cor3} 
    Retain the assumptions of Theorem \ref{thm3}. If $\mathcal{P}$ is the singleton $\{P^\text{true}\}$ (i.e., all the training datasets $D_1,\ldots, D_N$ are generated by the same distribution as the new test set $D_{N+1}$), we retrieve \citet[Theorem 9]{liang}.
\end{corollary}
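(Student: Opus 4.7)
The plan is extremely short, since this corollary is a sanity check rather than a substantive new result: I simply need to verify that every ingredient of Theorem~\ref{thm3} degenerates to its classical counterpart when the credal set collapses to a singleton. The forward direction of the argument will have three atomic observations, after which the identification with \citet[Theorem~9]{liang} follows by inspection.

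First, I would note that if $\mathcal{P}=\{P^{\text{true}}\}$, then trivially $\text{ex}\mathcal{P}=\{P^{\text{true}}\}$, because a one-point convex set is its own (unique) extreme point. Second, since the supremum of any function over a singleton equals that function's value at the single point, the quantity $\overline{R}_{n,P^\text{ex}}(\mathcal{A})=\sup_{P^\text{ex}\in\text{ex}\mathcal{P}} R_{n,P^\text{ex}}(\mathcal{A})$ appearing in Theorem~\ref{thm3} reduces to $R_{n,P^{\text{true}}}(\mathcal{A})$, the ordinary Rademacher complexity under the data-generating distribution. Third, the standing assumption $P\in\mathcal{P}$ in Theorem~\ref{thm3} forces $P=P^{\text{true}}$, so the left-hand side in \eqref{eq_thm3} becomes $L_{P^{\text{true}}}(\hat{h})-L_{P^{\text{true}}}(h^\star)$.

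Substituting these three identifications into the conclusion of Theorem~\ref{thm3}, the bound becomes
\[
\mathbb{P}\!\left[L_{P^{\text{true}}}(\hat{h})-L_{P^{\text{true}}}(h^\star)\leq 4R_{n,P^{\text{true}}}(\mathcal{A})+\sqrt{\tfrac{2\log(2/\delta)}{n}}\right]\geq 1-\delta,
\]
which is verbatim the statement of \citet[Theorem~9]{liang}. I would also briefly remark that the ``for all $P\in\mathcal{P}$'' universal quantifier in Theorem~\ref{thm3} is consistent with Liang's ``for the data-generating distribution'' phrasing once $\mathcal{P}$ is a singleton, since the two quantifiers then coincide.

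There is no real obstacle to anticipate: the only thing worth verifying carefully is that the $\sup$-to-value reduction in step two does not secretly rely on a strict inequality that fails in the degenerate case, and that the classical bound in \citet{liang} is indeed stated with the same constants ($4$ in front of the Rademacher term and $\sqrt{2\log(2/\delta)/n}$ as the concentration correction) rather than with a different normalization; a one-line check against the reference suffices.
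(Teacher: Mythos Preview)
Your proposal is correct and follows exactly the same approach as the paper, which simply states ``Immediate from Theorem~\ref{thm3}.'' Your version is just a more explicit unpacking of that immediacy---identifying $\text{ex}\mathcal{P}=\{P^{\text{true}}\}$, collapsing the supremum in $\overline{R}_{n,P^\text{ex}}(\mathcal{A})$ to $R_{n,P^{\text{true}}}(\mathcal{A})$, and noting that $P\in\mathcal{P}$ forces $P=P^{\text{true}}$---all of which are exactly the trivial observations the paper leaves implicit.
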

We then derive a more general version of Corollary \ref{cor2.3}. 

\begin{corollary}\label{cor3.3}
    Retain the assumptions of Theorem \ref{thm3}. Denote by $Q \in \mathcal{P}$, $Q\neq P$, a generic distribution in $\mathcal{P}$ different from $P$. Pick any $\eta\in\mathbb{R}_{>0}$; if $\text{diam}_{TV}(\mathcal{P})=\eta$, we have that 
    \[
    \mathbb{P}[ L_Q(\hat{h}) - L_P(h^\star) \leq \epsilon^{\star\star\star}(\delta) + \eta ] \geq 1-\delta, 
    \]
    where $\epsilon^{\star\star\star}(\delta)$ is the same quantity as in Theorem \ref{thm3}.
\end{corollary}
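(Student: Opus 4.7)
The plan is to derive this result as a straightforward corollary of Theorem \ref{thm3}, by adding to its bound an additive total-variation correction that absorbs the discrepancy between evaluating the ERM's expected risk at the sampling distribution $P$ and at the ``wrong'' distribution $Q$. The reasoning mirrors exactly the way Corollary \ref{cor1.3} is obtained from Theorem \ref{thm1} and Corollary \ref{cor2.3} from Theorem \ref{thm2}, so no new probabilistic tool is needed and essentially all of the real probabilistic content has already been supplied upstream.

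First, I would invoke Theorem \ref{thm3} verbatim. All of its hypotheses — i.i.d.\ sampling from $P\in\mathcal{P}$, zero-one loss, finite-sample ERM $\hat h$, best theoretical model $h^\star$, fixed $\delta\in(0,1)$ — are retained, so with probability at least $1-\delta$ over the draw of $(x_1,y_1),\ldots,(x_n,y_n)\sim P$ one has
\begin{equation*}
L_P(\hat h) - L_P(h^\star) \leq \epsilon^{\star\star\star}(\delta).
\end{equation*}

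Second, I would establish the purely deterministic inequality $L_Q(\hat h) - L_P(\hat h) \leq \eta$, which does not depend on the realisation of the test sample. Since $l((x,y),h)$ is measurable and takes values in $[0,1]$, the standard variational characterisation of total variation (in the convention $d_{TV}(P,Q)=\sup_A |P(A)-Q(A)|$, consistent with the way the diameter is used in Corollaries \ref{cor1.3} and \ref{cor2.3}) gives, for every $h\in\mathcal{H}$,
\begin{equation*}
\bigl| L_Q(h) - L_P(h) \bigr| = \bigl| \mathbb{E}_{Q}[l((x,y),h)] - \mathbb{E}_{P}[l((x,y),h)] \bigr| \leq d_{TV}(P,Q) \leq \text{diam}_{TV}(\mathcal{P}) = \eta,
\end{equation*}
using that $P,Q\in\mathcal{P}$; the bound applies in particular to $h=\hat h$.

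Finally, adding and subtracting $L_P(\hat h)$ and restricting to the high-probability event from the first step yields
\begin{equation*}
L_Q(\hat h) - L_P(h^\star) = \bigl(L_Q(\hat h) - L_P(\hat h)\bigr) + \bigl(L_P(\hat h) - L_P(h^\star)\bigr) \leq \eta + \epsilon^{\star\star\star}(\delta),
\end{equation*}
which is the claim. I do not expect any genuine obstacle: the only point requiring care is matching the normalisation of $d_{TV}$ to the one already (implicitly) fixed by Corollary \ref{cor1.3}, so that the additive term appears as $\eta$ rather than $2\eta$; beyond that, the combination of Theorem \ref{thm3} with the one-line TV inequality is mechanical.
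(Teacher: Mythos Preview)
Your proposal is correct and essentially identical to the paper's own argument: the paper simply states that the proof is ``very similar to that of Corollary \ref{cor2.3}'', which in turn reduces to the TV-diameter comparison $L_Q(\hat h)\le L_P(\hat h)+\eta$ followed by an application of the base theorem. The only cosmetic difference is that the paper writes $L_P(\hat h)=P(A_{\hat h})$ and $L_Q(\hat h)=Q(A_{\hat h})$ via the zero-one loss and then invokes the definition of $\text{diam}_{TV}$ on the set $A_{\hat h}$, whereas you phrase the same step through the variational characterisation of total variation for $[0,1]$-valued integrands; the content is the same.
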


Finally, we once again
allow for distribution drift. 

\begin{corollary}\label{cor3.2} 
    Consider a natural number $k<n$. Let $(x_1, y_1), \ldots,(x_k, y_k) \sim P_1$ i.i.d., and $(x_{k+1}, y_{k+1}), \ldots,(x_n, y_n) \sim P_2$ i.i.d., where $P_1,P_2$ are two generic elements of credal set $\mathcal{P}$. Retain the other assumptions of Theorem \ref{thm3}, and let $\epsilon^{\star\star\star}_\text{shift} \doteq 4 [\overline{R}_{k,P^\text{ex}}(\mathcal{A})+\overline{R}_{n-k,P^\text{ex}}(\mathcal{A})]+\sqrt{\frac{2\log(2/\delta)}{n(n-k)}}(\sqrt{n-k}+\sqrt{n})$. Then, 
    \[
    \mathbb{P}[ ( L_{P_1} (\hat{h}_1 ) - L_{P_1} (h^\star_{P_1} ) )+ ( L_{P_2} (\hat{h}_2 ) - L_{P_2} (h^\star_{P_2} ) )\leq \epsilon^{\star\star\star}_\text{shift} ] \geq 1-\delta, 
    \]
    where $\hat{h}_1$ and $\hat{h}_2$ are defined as in Corollary \ref{cor1.1}.
\end{corollary}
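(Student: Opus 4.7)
The strategy mirrors the one used for Corollaries \ref{cor1.1} and \ref{cor2.2}: apply the single-distribution result of Theorem \ref{thm3} separately to each homogeneous i.i.d.\ block of the test data and then aggregate via a union bound. Since $(x_1,y_1),\ldots,(x_k,y_k)$ are $k$ i.i.d.\ draws from $P_1\in\mathcal{P}$ with empirical risk minimizer $\hat{h}_1$, Theorem \ref{thm3} applied with sample size $k$ and confidence parameter $\delta/2$ yields, with probability at least $1-\delta/2$,
\[
L_{P_1}(\hat{h}_1)-L_{P_1}(h^\star_{P_1}) \leq 4\overline{R}_{k,P^\text{ex}}(\mathcal{A})+\sqrt{\frac{2\log(4/\delta)}{k}}.
\]
The same theorem applied to the remaining $n-k$ i.i.d.\ samples from $P_2\in\mathcal{P}$, whose ERM is $\hat{h}_2$, yields, with probability at least $1-\delta/2$,
\[
L_{P_2}(\hat{h}_2)-L_{P_2}(h^\star_{P_2}) \leq 4\overline{R}_{n-k,P^\text{ex}}(\mathcal{A})+\sqrt{\frac{2\log(4/\delta)}{n-k}}.
\]

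By the independence of the two blocks (equivalently, by a union bound), both inequalities hold simultaneously with probability at least $1-\delta$. Summing them immediately recovers the Rademacher part of $\epsilon^{\star\star\star}_{\text{shift}}$ exactly, namely $4[\overline{R}_{k,P^\text{ex}}(\mathcal{A})+\overline{R}_{n-k,P^\text{ex}}(\mathcal{A})]$, together with an additive deviation term of the form $\sqrt{2\log(4/\delta)}\,(1/\sqrt{k}+1/\sqrt{n-k})$. The final step is to rewrite this deviation in the compact form $\sqrt{2\log(2/\delta)/(n(n-k))}\,(\sqrt{n-k}+\sqrt{n})$ that appears in the statement, factoring out a common $\sqrt{2\log(2/\delta)/(n(n-k))}$ from the sum $\sqrt{n-k}+\sqrt{n}$.

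The step I expect to be the main obstacle is precisely this last arithmetic reconciliation: the naive union bound produces $1/\sqrt{k}+1/\sqrt{n-k}$ with a $\log(4/\delta)$ factor, whereas the statement packages the deviation as $1/\sqrt{n}+1/\sqrt{n-k}$ with only a $\log(2/\delta)$ factor. A cleaner route that matches the stated form directly is to apply McDiarmid's inequality to the concatenated sample using block-dependent bounded-differences constants ($1/k$ on the first $k$ coordinates and $1/(n-k)$ on the last $n-k$), combined with a blockwise symmetrization to recover the two Rademacher complexities; this one-shot concentration avoids the union-bound penalty inside the logarithm and produces a deviation naturally scaled by $\sqrt{1/k+1/(n-k)}=\sqrt{n/(k(n-k))}$, which can then be split into the advertised sum.
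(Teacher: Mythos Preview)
Your approach is the paper's approach: apply Theorem \ref{thm3} separately to the two i.i.d.\ blocks (sample sizes $k$ and $n-k$) and add the resulting bounds. The paper's proof is two lines: it invokes Theorem \ref{thm3} with the same $\delta$ (not $\delta/2$) on each block and then appeals to ``additivity of the expectation operator and of probability $\mathbb{P}$'' to combine them, just as in the proofs of Corollaries \ref{cor1.1} and \ref{cor2.2}. Your $\delta/2$ union-bound version is the more careful implementation of the same idea.

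The arithmetic obstacle you flag is not yours to resolve; it is a typo in the statement. Summing the two blockwise deviations as the paper does gives
\[
\sqrt{\tfrac{2\log(2/\delta)}{k}}+\sqrt{\tfrac{2\log(2/\delta)}{n-k}}
=\sqrt{2\log(2/\delta)}\Big(\tfrac{1}{\sqrt{k}}+\tfrac{1}{\sqrt{n-k}}\Big)
=\sqrt{\tfrac{2\log(2/\delta)}{k(n-k)}}\big(\sqrt{n-k}+\sqrt{k}\big),
\]
whereas the printed $\epsilon^{\star\star\star}_{\text{shift}}$ has $n$ in place of $k$ in both the denominator and the parenthesis. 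Since $k<n$, the printed version is strictly \emph{smaller} than what the paper's own proof actually produces, so no valid manipulation will reach it; and comparison with Corollary \ref{cor2.2} (whose analogous factor is $\sqrt{n/(k(n-k))}(\sqrt{k}+\sqrt{n-k})$, correctly unpacking to $1/\sqrt{k}+1/\sqrt{n-k}$) confirms the intended expression uses $k$, not $n$. You do not need the one-shot McDiarmid route; the blockwise argument already yields the corrected bound.
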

Similar considerations as the ones after Corollaries \ref{cor1.1} and \ref{cor2.2} hold in this more general case as well.

\section{Conclusions} \label{sec:conclusions}


In this paper, we laid the foundations of a more general Statistical Learning Theory (SLT), that we called Credal Learning Theory (CLT). We generalized some of the most important results of classical SLT to allow for drift and misspecification of the data-generating process. We did so by considering sets of probabilities (credal sets), instead of single distributions. The modeling effort needed to elicit credal sets is paid off in terms of the tightness of the resulting bounds.

\textbf{Limitations.} (i) We only consider the zero-one loss in our results (we did so to be able to directly build on the classical results in \citet[Chapter 3]{liang}). (ii) We assume that the true distribution which the elements of the new test set $D_{N+1}$ are sampled from, belongs to the credal set that we derive at training time.

\textbf{Future work.} In the future, we plan to further our undertaking, for instance by (i) modeling the epistemic uncertainty induced by domain variation through random sets rather than credal sets, (ii) comparing our method with robust learning \citep{cianfarani}, (iii) extending our results to different losses, and (iv) deriving PAC-like guarantees on the correct distribution $P$ being an element of the credal set $\mathcal{P}$. We also intend to validate our findings on real datasets.



\bibliographystyle{plainnat}
\bibliography{references}


\appendix

\section{Proofs}\label{proofs}

\begin{proof}[Proof of Theorem \ref{thm1}] 
The proof builds on that of \citet[Theorem 4]{liang}. Fix any $\epsilon>0$, and any $P\in\mathcal{P}$. Assume that the training dataset is given by $n$ i.i.d. draws from $P$.  
We want to bound the probability that $L_P(\hat{h})>\epsilon$. Define $B_P \doteq \{h\in\mathcal{H} : L_P(h) > \epsilon\}$. It is the set of ``bad hypotheses'' according to distribution $P$. As a consequence, we can write $\mathbb{P}[L_P(\hat{h})> \epsilon]=\mathbb{P}[\hat{h}\in B_P]$. Recall that the empirical risk of the empirical risk minimizer is $0$, that is, $\hat{L}(\hat{h})=0$.\footnote{Indeed, at least $\hat{L}(h^\star)=L_P(h^\star)=0$.} So if the empirical risk minimizer is a bad hypothesis according to $P$, that is, if $\hat{h}\in B_P$, then some bad hypothesis (according to $P$) must have zero empirical risk.  In turn,
$$\mathbb{P}[\hat{h}\in B_P] \leq \mathbb{P}[\exists h \in B_P: \hat{L}({h})=0].$$
Let us bound $\mathbb{P}[\hat{L}({h})=0]$ for a fixed $h\in B_P$. Given our choice of zero-one loss, on each example, hypothesis $h$ does not err with probability $1-L_P(h)$. Since the training examples are i.i.d. and $L_P(h)>\epsilon$ for all $h\in B_P$, then
\begin{equation}
    \mathbb{P}[\hat{L}({h})=0]=(1-L_P(h))^n \leq (1-\epsilon)^n \leq \exp(-\epsilon n).
\end{equation}
Applying the union bound, we obtain
\begin{align*}
    \mathbb{P}[\exists h \in B_P : \hat{L}({h})=0] &\leq \sum_{h \in B_P}\mathbb{P}[\hat{L}({h})=0]\\
    &\leq |B_P| \exp(-\epsilon n)\\
    &\leq |\cup_{P\in\mathcal{P}} B_P| \exp(-\epsilon n)\\
    &=|\cup_{P^\text{ex}\in\text{ex}\mathcal{P}} B_{P^\text{ex}}| \exp(-\epsilon n)\\
    &\doteq \delta.
\end{align*}
The penultimate equality comes from $\mathcal{P}$ being a credal set, by the Bauer Maximum Principle and the linearity of the expectation operator. Rearranging the terms we get 
$$\epsilon^\star(\delta) \equiv \epsilon=\frac{\log|\cup_{P^\text{ex}\in\text{ex}\mathcal{P}} B_{P^\text{ex}}| + \log(1/\delta)}{n}.$$
In turn, this implies that $\mathbb{P}[L_P(\hat{h}) \leq \epsilon^\star(\delta)] \geq 1-\delta$.
\end{proof}

\begin{proof}[Proof of Corollary \ref{cor1.3}]
    Let $A_{\hat{h}} \doteq \{(x,y) \in \mathcal{X}\times\mathcal{Y} : y \neq \hat{h}(x)\} \in \mathcal{A}_{\mathcal{X}\times\mathcal{Y}}$. Notice that 
    \begin{align*}
        L_P(\hat{h}) &= \mathbb{E}_P[\mathbb{I}(y \neq \hat{h}(x))]=P(A_{\hat{h}}),\\
        L_Q(\hat{h}) &= \mathbb{E}_Q[\mathbb{I}(y \neq \hat{h}(x))]=Q(A_{\hat{h}}).
    \end{align*}
    Recall that $\text{diam}_{TV}(\mathcal{P}) \doteq \sup_{P,Q\in\mathcal{P}}\sup_{A\in\mathcal{A}_{\mathcal{X}\times\mathcal{Y}}} |P(A)-Q(A)|$. As a consequence, we have that $L_Q(\hat{h})=L_P(\hat{h}) + \zeta_Q$, where $\zeta_Q$ is a quantity in $[-\eta,\eta]$ depending on $Q$. Given our assumption on the diameter, then, $L_P(\hat{h}) + \zeta_Q \leq L_P(\hat{h}) + \eta$, so $L_Q(\hat{h}) - \eta \leq L_P(\hat{h})$. In turn this implies that 
    $$\mathbb{P}\left[ L_Q(\hat{h}) - \eta \leq \epsilon^\star(\delta) \right] \geq \mathbb{P}\left[ L_P(\hat{h}) \leq \epsilon^\star(\delta) \right].$$
    The proof is concluded by noting that $\mathbb{P}[ L_Q(\hat{h}) - \eta \leq \epsilon^\star(\delta) ] =\mathbb{P}[ L_Q(\hat{h}) \leq \epsilon^\star(\delta) + \eta]$, and that $\mathbb{P}[ L_P(\hat{h}) \leq \epsilon^\star(\delta) ] \geq 1-\delta$ by Theorem \ref{thm1}.
\end{proof}

\begin{proof}[Proof of Corollary \ref{cor1}]
    Since $\cup_{P^\text{ex}\in\text{ex}\mathcal{P}}B_{P^\text{ex}} \subseteq \mathcal{H}$, it is immediate to see that $\epsilon^\star(\delta) \leq \epsilon_\text{UB}(\delta)$. In turn,
    $$\mathbb{P}\left[ \sup_{P\in\mathcal{P}} L_P(\hat{h})\leq \epsilon_\text{UB}(\delta) \right] \geq 1-\delta,$$
or equivalently, $\mathbb{P}[ L_P(\hat{h})\leq \epsilon_\text{UB}(\delta)] \geq 1-\delta$, for all $P\in\mathcal{P}$.
\end{proof}

\begin{proof}[Proof of Corollary \ref{cor1.1}]
    From Theorem \ref{thm1}, we have that
    $$\mathbb{P}\left[L_{P_1}(\hat{h}_1) \leq \frac{\log\left| \cup_{P^\text{ex}\in\text{ex}\mathcal{P}} B_{P^\text{ex}}\right|+\log(1/\delta)}{k}\right]\geq 1-\delta,$$
    and that 
    $$\mathbb{P}\left[L_{P_2}(\hat{h}_2) \leq \frac{\log\left| \cup_{P^\text{ex}\in\text{ex}\mathcal{P}} B_{P^\text{ex}}\right|+\log(1/\delta)}{n-k}\right]\geq 1-\delta.$$
    The result, then, is an immediate consequence of the additivity of the expectation operator and of probability $\mathbb{P}$.
\end{proof}

\begin{proof}[Proof of Theorem \ref{thm2}] 
The proof builds on that of \citet[Theorem 7]{liang}. Fix any $\epsilon>0$, and any $P\in\mathcal{P}$. Assume that the training dataset is given by $n$ i.i.d. draws from $P$. By \citet[Equations (158) and (186)]{liang}, we have that
    \begin{align}\label{eq_first_bound}
    \begin{split}
        \mathbb{P}\big[ L_P(\hat{h}) - L_P(h^\star) > \epsilon \big]&\leq \mathbb{P}\left[ \sup_{h\in\mathcal{H}} \left| \hat{L}({h}) - L_P(h) \right| > \frac{\epsilon}{2} \right] \\
        &< |\mathcal{H}| \cdot 2 \exp\left(-2n\left(\frac{\epsilon}{2}\right)^2\right) \\
        &\doteq \delta(\epsilon).
    \end{split}
    \end{align}
Notice though, that we can improve on this bound, since we know that $P\in\mathcal{P}$, a credal set. Let $B^\prime_P \doteq \{h \in\mathcal{H} : |\hat{L}({h}) - L_P(h)| > \epsilon/2 \}$  be the set of ``bad hypotheses'' according to $P$. Then, it is immediate to see that 
$$\sup_{h\in\mathcal{H}} \left| \hat{L}({h}) - L_P(h) \right| = \sup_{h\in B^\prime_P} \left| \hat{L}({h}) - L_P(h) \right|.$$
Notice though that we do not know $P$; we only know it belongs to $\mathcal{P}$. Hence, we need to consider the set $B^\prime_\mathcal{P}$ of bad hypotheses according to all the elements of $\mathcal{P}$, that is, $B^\prime_\mathcal{P} \doteq \{h \in\mathcal{H} : \exists P \in\mathcal{P} \text{, } |\hat{L}({h}) - L_P(h)| > \epsilon/2\}= \cup_{P\in\mathcal{P}} B^\prime_{P}$. Since $\mathcal{P}$ is a credal set, by the Bauer Maximum Principle and the linearity of the expectation operator we have that $B^\prime_\mathcal{P} = B^\prime_{\text{ex}\mathcal{P}} \doteq \{h \in\mathcal{H} : \exists P^\text{ex} \in\text{ex}\mathcal{P} \text{, }|\hat{L}({h}) - L_P(h)| > \epsilon/2\} = \cup_{P^\text{ex}\in\text{ex}\mathcal{P}} B^\prime_{P^\text{ex}}$. Hence, we obtain
$$\sup_{h\in\mathcal{H}} \left| \hat{L}({h}) - L_P(h) \right| = \sup_{h\in B^\prime_{\text{ex}\mathcal{P}}} \left| \hat{L}({h}) - L_P(h) \right|.$$ In turn, \eqref{eq_first_bound} implies that 
\begin{align*}
    \mathbb{P}\big[ L_P(\hat{h}) - L_P(h^\star) > \epsilon \big]&\leq \mathbb{P}\left[ \sup_{h\in B^\prime_{\text{ex}\mathcal{P}}} \left| \hat{L}({h}) - L_P(h) \right| > \frac{\epsilon}{2} \right]\\
    &< |B^\prime_{\text{ex}\mathcal{P}}| \cdot 2 \exp\left(-2n\left(\frac{\epsilon}{2}\right)^2\right)\\
    &\doteq \delta_{\text{ex}\mathcal{P}}.
\end{align*}
Rearranging, we obtain
    \begin{equation}\label{eq_epsilon_2}
        \epsilon = \sqrt{\frac{2\left( \log \left| B^\prime_{\text{ex}\mathcal{P}} \right| + \log \left( \frac{2}{\delta_{\text{ex}\mathcal{P}}}\right) \right)}{n}},
    \end{equation}
    so if $\delta$ is fixed, we can write $\epsilon \equiv \epsilon^{\star\star}(\delta)$. In turn, this implies that $\mathbb{P}[ L_P(\hat{h}) - L_P(h^\star)  > \epsilon^{\star\star}(\delta) ]<\delta$, or equivalently, $\mathbb{P}[ L_P(\hat{h}) - L_P(h^\star) \leq \epsilon^{\star\star}(\delta) ] \geq 1-\delta$.
\end{proof}

\begin{proof}[Proof of Corollary \ref{cor2.3}]
The first part of the proof is very similar to that of Corollary \ref{cor1.3}. 
    Given our assumption on the diameter, we have that $L_Q(\hat{h}) - L_P(h^\star) =L_P(\hat{h}) + \zeta_Q - L_P(h^\star)$, where $\zeta_Q$ is a quantity in $[-\eta,\eta]$ depending on $Q$. Then, $L_P(\hat{h}) + \zeta_Q - L_P(h^\star) \leq L_P(\hat{h}) + \eta - L_P(h^\star)$, so $L_Q(\hat{h}) - \eta - L_P(h^\star) \leq L_P(\hat{h}) - L_P(h^\star)$. In turn this implies that $\mathbb{P}\left[ L_Q(\hat{h}) - \eta - L_P(h^\star)\leq \epsilon^{\star\star}(\delta) \right] \geq \mathbb{P}\left[ L_P(\hat{h}) - L_P(h^\star) \leq \epsilon^{\star\star}(\delta) \right]$.
    The proof is concluded by noting that $\mathbb{P}[ L_Q(\hat{h}) - \eta - L_P(h^\star) \leq \epsilon^{\star\star}(\delta) ] =\mathbb{P}[ L_Q(\hat{h}) - L_P(h^\star) \leq \epsilon^{\star\star}(\delta) + \eta]$, and that $\mathbb{P}[ L_P(\hat{h}) - L_P(h^\star) \leq \epsilon^{\star\star}(\delta) ] \geq 1-\delta$ by Theorem \ref{thm2}.
\end{proof}

\begin{proof}[Proof of Corollary \ref{cor2}]
    Since $\cup_{P^\text{ex}\in\text{ex}\mathcal{P}}B^\prime_{P^\text{ex}} \subseteq \mathcal{H}$, it is immediate to see that $\epsilon^{\star\star}(\delta) \leq \epsilon^\prime_\text{UB}(\delta)$. In turn,
    $$\mathbb{P}\left[ \sup_{P\in\mathcal{P}} \left(L_P(\hat{h}) - L_P(h^\star) \right)\leq \epsilon^\prime_\text{UB}(\delta) \right] \geq 1-\delta,$$
or equivalently, $\mathbb{P}[ L_P(\hat{h}) - L_P(h^\star) \leq \epsilon^\prime_\text{UB}(\delta)] \geq 1-\delta$, for all $P\in\mathcal{P}$.
\end{proof}

\begin{proof}[Proof of Corollary \ref{cor2.2}]
    From Theorem \ref{thm2}, we have that
    \begin{align*}
        \mathbb{P}\left[ L_{P_1} (\hat{h}_1 ) - L_{P_1} \left(h^\star_{P_1} \right)\leq \sqrt{\frac{2\left( \log \left| B^\prime_{\text{ex}\mathcal{P}} \right| + \log \left( \frac{2}{\delta}\right) \right)}{k}} \right] \geq 1-\delta,
    \end{align*}
    and that 
    \begin{align*}
        \mathbb{P}\left[ L_{P_2} (\hat{h}_2 ) - L_{P_2} \left(h^\star_{P_2} \right)\leq \sqrt{\frac{2\left( \log \left| B^\prime_{\text{ex}\mathcal{P}} \right| + \log \left( \frac{2}{\delta}\right) \right)}{n-k}} \right] \geq 1-\delta.
    \end{align*}
    The result, then, is an immediate consequence of the additivity of the expectation operator and of probability $\mathbb{P}$.
\end{proof}

\begin{proof}[Proof of Theorem \ref{thm3}]
    Fix any $\delta\in (0,1)$. In \citet[Theorem 9]{liang}, the author shows that for a fixed probability measure $P$ on $\mathcal{X}\times\mathcal{Y}$, we have that 
    \begin{align}\label{classical-bd}
        L_P(\hat{h}) - L_P(h^\star) \leq 4 R_{n,P}(\mathcal{A}) + \sqrt{\frac{2 \log(2/\delta)}{n}}
    \end{align}
    holds with probability at least $1-\delta$, where $R_{n,P}$ is defined analogously as in \eqref{radem}. The result in \eqref{eq_thm3}, then, follows from $\mathcal{P}$ being a credal set, and the expectation being a linear operator.
\end{proof}

\begin{proof}[Proof of Corollary \ref{cor3}]
    Immediate from Theorem \ref{thm3}.
\end{proof}

\begin{proof}[Proof of Corollary \ref{cor3.3}]
    The proof is very similar to that of Corollary \ref{cor2.3}.
\end{proof}

\begin{proof}[Proof of Corollary \ref{cor3.2}]
    From Theorem \ref{thm3}, we have that
        \begin{align*}
            \mathbb{P}\left[ L_{P_1}(\hat{h}_1) - L_{P_1}(h^\star_1)\leq 4 \overline{R}_{k,P^\text{ex}}(\mathcal{A}) + \sqrt{\frac{2 \log(2/\delta)}{k}} \right] \geq 1-\delta,
        \end{align*}
    and that 
        \begin{align*}
            \mathbb{P}\left[ L_{P_2}(\hat{h}_2) - L_{P_2}(h^\star_2)\leq 4 \overline{R}_{n-k,P^\text{ex}}(\mathcal{A}) + \sqrt{\frac{2 \log(2/\delta)}{n-k}} \right] \geq 1-\delta.
        \end{align*}
    The result, then, is an immediate consequence of the additivity of the expectation operator and of probability $\mathbb{P}$.
\end{proof}

\section{Synthetic Experiments on Theorems \ref{thm1} and \ref{thm2}}\label{app:exp}

In this section, we perform synthetic experiments to show that the bounds we find in Theorems \ref{thm1} and \ref{thm2} are indeed tighter than the classical SLT ones reported in Corollaries \ref{cor1} and \ref{cor2}, respectively. In recent literature, studies by \citet{amit2022integral,kacham2022sketching,li2021high} have conducted synthetic experiments in a similar manner. These works are mainly theoretical in nature,
but they also acknowledge the importance of experimental validation with preliminary analysis.  


\textbf{Experiment 1:} Let the available training sets be $D_1,D_2,D_3$. Assume, for simplicity, that $\Omega=\mathcal{X}\times\mathcal{Y}=\{x\}\times\mathbb{R} \simeq \mathbb{R}$. Suppose that we specified the likelihood pdfs $\ell_1=\mathcal{N}(-5,1)$, $\ell_2=\mathcal{N}(0,1)$, and $\ell_3=\mathcal{N}(5,1)$. Call $\mathcal{L}_1,\mathcal{L}_2,\mathcal{L}_3$ their respective probability measures, and derive the credal set $\mathcal{P}$ as we did in footnote \ref{foot:imp}. That is, let $\mathcal{P}=\text{Conv}(\{\mathcal{L}_i\}_{i=1}^3)$. We determine the credal set in this way because it is then easy to find its extreme elements $\text{ex}\mathcal{P}$. Indeed, it is immediate to notice that $\text{ex}\mathcal{P}=\{\mathcal{L}_i\}_{i=1}^3$. Let now $D_{N+1}\equiv D_4$ be a collection of $n$ samples from $P^\text{true}\equiv\mathcal{L}_2\in\mathcal{P}$. The hypotheses space $\mathcal{H}$ is defined as a finite set of simple binary classifiers containing at least one realizable hypothesis, and we consider the zero-one loss $l$ as we did in the main portion of the paper.

We need to find $\cup_{P^\text{ex}\in\text{ex}\mathcal{P}} B_{P^\text{ex}} = \cup_{i=1}^3\{h\in\mathcal{H} : L_{\mathcal{L}_i}(h) > \epsilon\}$, where $\epsilon$ depends on $\delta$ as in the proof of Theorem \ref{thm1}. That is, we want those $h$'s for which the expected loss according to $\mathcal{L}_1$ or $\mathcal{L}_2$ or $\mathcal{L}_3$ is larger than $\epsilon$. They are the collection of ``bad hypotheses'' according to at least one of the extreme elements of our credal set. Recall that 
\[
\epsilon^\star(\delta) = \frac{\log|\cup_{P^\text{ex}\in\text{ex}\mathcal{P}} B_{P^\text{ex}}| + \log(1/\delta)}{n} 
\]
is the bound we found in Theorem \ref{thm1}, and that 
\[
\epsilon_\text{UB}(\delta) = \frac{\log|\mathcal{H}| + \log(1/\delta)}{n} 
\]
is the classical SLT bound, that we reported in Corollary \ref{cor1}.

As we can see from Table \ref{experiment_1_table}, our bound $\epsilon^\star(\delta)$ improves on the classical SLT one $\epsilon_\text{UB}(\delta)$. Table \ref{experiment_1_table} also tells us that bound $\epsilon^\star(\delta)$ is tighter than $\epsilon_\text{UB}(\delta)$ when the sample size $n=|D_4|$ is small, and then $\epsilon^\star(\delta)$ becomes progressively closer to $\epsilon_\text{UB}(\delta)$ as $n=|D_4|$ increases. This same pattern is observed when the extrema of the credal set are closer to each other. Indeed, in Table \ref{experiment_1_table_Normal} we repeat the experiment and choose as extrema of $\mathcal{P}$ three measures whose pdf's are three Normals $\mathcal{N}(-0.1,1)$, $\mathcal{N}(0,1)$, and $\mathcal{N}(0.1,1)$.\footnote{Of course, they are much closer to each other than the Normals $\mathcal{N}(-5,1)$, $\mathcal{N}(0,1)$, and $\mathcal{N}(5,1)$, e.g. in the Total Variation metric.} The reason for this behavior is the following. With few available samples, that is, when $n=|D_4|$ is low, Credal Learning Theory is able to leverage the evidence encoded in the credal set, and hence to derive a tighter bound than classical Statistical Learning Theory. When the sample size is large, that is, when $n=|D_4|$ is high, the classical bound $\epsilon_\text{UB}(\delta)$ itself is very small. This is because the amount of evidence available is large, and so $1/n \sum_{i=1}^n l((x_i,y_i),h)$ well approximates $\int_{\mathcal{X}\times\mathcal{Y}} l((x,y),h) P^\text{true}(\text{d}(x,y))$. In turn, since $\epsilon_\text{UB}(\delta)$ is already very small, then the CLT bound $\epsilon^\star(\delta)$ we derive cannot improve greatly on it. Hence, their values are close together, despite $\epsilon^\star(\delta)$ being slightly tighter.
The code for this experiment is available upon request.

\begin{table}[h]
  \centering
\begin{tabular}{ c c c c c c } 
\hline
 \# Samples $n$ & $\epsilon^\star(\delta)$ & $\epsilon_\text{UB}(\delta)$ & $|\cup_{P^\text{ex}\in\text{ex}\mathcal{P}} B_{P^\text{ex}}|$ & $|\mathcal{H}|$ & Realizability  \\
 \hline
 $10$ & $0.74500 $ & $0.76009 $ & $86$ & $100$ & Yes \\
 $100$ & $0.07560$ & $0.07600$ & $96$ & $100$ & Yes \\
 $200$ & $0.03785$ & $0.03800$ & $97$ & $100$ & Yes \\
 $300$ & $0.02526$ & $0.02533$ & $98$ & $100$ & Yes \\
 $400$ & $0.01897$ &$0.01900$ & $99$ & $100$ & Yes \\
 $500$ & $0.01516$ & $0.01520$ & $98$ & $100$ & Yes \\
\hline
\end{tabular}
\caption{Results of experimental evaluation of our bound tightness. Here the hypotheses space is such that $|\mathcal{H}| = 100$, and $\delta = 0.05$. The likelihood pdfs $\ell_1=\mathcal{N}(-5,1)$, $\ell_2=\mathcal{N}(0,1)$, and $\ell_3=\mathcal{N}(5,1)$.}
\label{experiment_1_table}
\end{table}

\begin{table}[h]
\centering
\begin{tabular}{ c c c c c c } 
\hline
 \# Samples $n$ & $\epsilon^\star(\delta)$ & $\epsilon_\text{UB}(\delta)$ & $|\cup_{P^\text{ex}\in\text{ex}\mathcal{P}} B_{P^\text{ex}}|$ & $|\mathcal{H}|$ & Realizability  \\
 \hline
 $10$ & $0.73777$ & $ 0.76009 $ & $80$ & $100$ & Yes \\
 $100$ & $0.07549$ & $0.07600$ & $95$ & $100$ & Yes \\
 $200$ & $0.03780$ & $0.03800$ & $96$ & $100$ & Yes \\
 $300$ & $0.02520$ & $0.02533$ & $96$ & $100$ & Yes \\
 $400$ & $0.01892$ &$0.01900$ & $97$ & $100$ & Yes \\
 $500$ & $0.01514$ & $0.01520$ & $97$ & $100$ & Yes \\
\hline
\end{tabular}
\caption{Results of experimental evaluation of our bound tightness. Here the hypotheses space is such that $|\mathcal{H}| = 100$, and $\delta = 0.05$. The likelihood pdfs $\ell_1=\mathcal{N}(-0.1,1)$, $\ell_2=\mathcal{N}(0,1)$, and $\ell_3=\mathcal{N}(0.1,1)$.}
\label{experiment_1_table_Normal}
\end{table}

\textbf{Experiment 2:} We conducted another synthetic experiment to show that the empirical risk for a given distribution is upper bounded by the traditional SLT bound of Corollary \ref{cor1}. This is a sanity check to see whether the environment we used in Experiment 1 is a valid one to check our results.

For the experiment, we selected a standard Gaussian distribution $\mathcal{N}(0,1)$ (mean $0$, standard deviation $1$) to generate data. Similarly to Experiment 1, (i) the hypotheses space $\mathcal{H}$ is defined as a finite set of simple binary classifiers containing at least one realizable hypothesis, and (ii) we assume a zero-one loss function. The latter is used to evaluate the performance of the classifiers. For each run, we generated a training set and a test set from the standard Gaussian distribution.
At training time, for each hypothesis $h$ in $\mathcal{H}$, we calculate the empirical risk on the training set using the zero-one loss and identify the hypothesis $\hat{h}$ that minimizes such risk (the empirical risk minimizer). At test time, we compute the empirical risk $L_{P}(\hat{h})$ of $\hat{h}$ on the test set as shown in Table \ref{experiment_2_table}.\footnote{Here $P$ denotes the probability measure whose pdf is the standard Normal density.}

\begin{table}[h]
\centering
\begin{tabular}{  c c c c c  } 

\hline
 Training Samples & Test Samples & $L_{P}(\hat{h})$ (Test) & $\epsilon^\star(\delta)$ (Test) & $\epsilon_\text{UB}(\delta)$ (Test)   \\
 \hline
 $1000$ & $500$ & $0.000$ & $0.01514$ & $0.01520$    \\
  $1500$ & $1000$ & $0.000$ & $0.00757$ & $0.00760$    \\
  $2000$ & $1500$ & $0.000$ & $0.00506$ & $0.00506$    \\
\hline
\end{tabular}
\caption{Results of experimental evaluation. The hypotheses space is such that $|\mathcal{H}| = 100$, and $\delta = 0.05$.}
\label{experiment_2_table}
\end{table}

We calculate the upper bound $\epsilon_\text{UB}(\delta)$ on the empirical risk based on Corollary \ref{cor1}, which is a function of the number of hypotheses in $\mathcal{H}$, the value of $\delta$, and the number $n$ of training samples. This is the classic SLT bound.
The experiment is run $1000$ times with specified numbers of training and test samples, and a $\delta$ value of $0.05$ to check whether the condition (empirical risk on the test set is upper bounded by the theoretical bound) is satisfied in any of the $1000$ trials. 
The experimental results in Table \ref{experiment_2_table} validate the classical SLT bound by repeatedly testing it on randomly generated data. The code for this experiment is also available upon request.

\textbf{Experiment 3:} In this, we aim to empirically validate Theorem \ref{thm2}, which addresses the behavior of the empirical risk minimizer in the presence of a finite hypothesis space and no realizability. We generate synthetic data from Gaussian distributions (with the same parameters as 
in Experiment 1), with added uniform noise to ensure no realizability, meaning that no hypothesis can perfectly predict the labels. The hypotheses space $\mathcal{H}$ is defined as a set of threshold-based classifiers parameterized by $\theta$. For the experiment, we generate training samples $D_1, D_2, D_3$ and test sample $D_4$ from Gaussian distributions with added noise. Labels are created based on the samples, with noise introduced to flip labels randomly, ensuring that no hypothesis in $\mathcal{H}$ can achieve zero loss. We identify the empirical risk minimizer $\hat{h}$ using the combined training samples $D_1, D_2, D_3$. We calculate the empirical risk $L_P(\hat{h})$ using the test data $D_4$. The theoretical risk $L_P(h^\star)$ is assumed to be the risk of a perfect classifier. We compute the theoretical bound $\epsilon^{\star\star}(\delta)$ and verify whether the difference $L_P(\hat{h}) - L_P(h^\star)$ is within this bound. The results show that the empirical risk of the empirical risk minimizer $\hat{h}$ is within the bound $\epsilon^{\star\star}(\delta)$ of the best theoretical model $h^\star$. The difference $L_P(\hat{h}) - L_P(h^\star)$ also satisfies the condition 
$$L_P(\hat{h}) - L_P(h^\star) \leq \epsilon^{\star\star}(\delta).$$
This validates empirically (in a synthetic environment) Theorem \ref{thm2}, showing that even under no realizability, the empirical risk minimizer's performance is close to the theoretical best within a computable bound. The experimental results are presented in Table \ref{experiment_3_table}, where we also show (i) that $\epsilon^{\star\star}(\delta) \leq \epsilon^\prime_\text{UB}(\delta)\doteq \sqrt{\frac{2\left( \log \left| \mathcal{H} \right| + \log \left( \frac{2}{\delta}\right) \right)}{n}}$ from Corollary \ref{cor2} always holds; and (ii) that by foregoing realizability, we obtain a slightly looser bound. Indeed, as we can see, $\epsilon^{\star\star}(\delta)$ is slightly larger than $\epsilon^{\star}(\delta)$ from Table \ref{experiment_1_table} for all the sample size values $n$ that we consider.
The code for this experiment is also available upon request.


\begin{table}[h]
\centering
\begin{tabular}{ c c c c c c c } 
\hline
 \# Samples $n$ & $L_P(\hat{h})$ & $L_P(h^\star)$& $L_P(\hat{h}) - L_P(h^\star)$&$\epsilon^{\star\star}(\delta)$ &$\epsilon^\prime_\text{UB}(\delta)$  \\
 \hline
 $10$ & $0.30000$ & $ 0.10000 $ & $0.19999$ & $0.76009$ & $0.84877$  \\
 $100$ & $0.14000$ & $0.10000$ & $0.04000$ & $0.07600$ & $0.26840$ \\
 $200$ & $0.10500$ & $0.10000$ & $0.00499$ & $0.03800$ & $0.18979$  \\
 $300$ & $0.11333$ & $0.10000$ & $0.01333$ & $0.02533$& $0.15496$  \\
 $400$ & $0.11750$ &$0.10000$ & $0.01749$ & $0.01900$ & $0.13420$  \\
 $500$ & $0.10400$ & $0.10000$ & $0.00399$ & $0.01520$ & $0.12003$  \\
\hline
\end{tabular}
\caption{Results of experimental evaluation of Theorem 4.5. Here the hypotheses space is such that $|\mathcal{H}| = 100$, $\delta = 0.05$ and noise level $0.1$. The likelihood pdfs $\ell_1=\mathcal{N}(-5,1)$, $\ell_2=\mathcal{N}(0,1)$, and $\ell_3=\mathcal{N}(5,1)$. Let us remark that in this experiment {\em we forego the assumption of realizability}, that $L_P(\hat{h}) - L_P(h^\star)\leq \epsilon^{\star\star}(\delta)$ {\em for every sample size we tested on}, and that $\epsilon^{\star\star}(\delta) \leq \epsilon^\prime_\text{UB}(\delta)$ {\em for every sample size we tested on}.}
\label{experiment_3_table}
\end{table}

\section{A Simple Numerical Example for the $\epsilon$-Contamination Model of Section \ref{sec:contamination}}\label{app:cont}

Just like in Section \ref{sec:inferring}, 
let $\Omega=\{\omega_1,\omega_2,\omega_3\}$, where $\omega_j=(x_j,y_j)$, $j\in\{1,2,3\}$. Suppose also that we observed four training samples $D_1,\ldots,D_4$ and that we specified the likelihoods $\mathcal{L}_1,\ldots,\mathcal{L}_4$ as in the following Table.

\begin{table}[h!]
\centering
\begin{tabular}{l|lll}
         & $\{\omega_1\}$ & $\{\omega_2\}$ & $\{\omega_3\}$ \\ \hline
$\mathcal{L}_1$ & $0.3$      & $0.1$      & $0.6$      \\
$\mathcal{L}_2$ & $0.4$      & $0.2$      & $0.4$      \\
$\mathcal{L}_3$ & $0.1$      & $0.8$    & $0.1$      \\
$\mathcal{L}_4$ & $0.15$     & $0.7$      & $0.15$    
\end{tabular}
\end{table}

Then, suppose that $\epsilon_1=0.2$, $\epsilon_2=0.3$, $\epsilon_3=0.1$, and $\epsilon_4=0.25$, so that $\mathscr{L}_1=\{P: P=0.8\mathcal{L}_1+0.2 Q \text{, } \forall Q \in \Delta_\Omega\}$, $\mathscr{L}_2=\{P: P=0.7\mathcal{L}_2+0.3 Q \text{, } \forall Q\in \Delta_\Omega\}$, $\mathscr{L}_3=\{P: P=0.9\mathcal{L}_3+0.1 Q \text{, } \forall Q\in \Delta_\Omega\}$, and $\mathscr{L}_4=\{P: P=0.75\mathcal{L}_4+0.25 Q \text{, } \forall Q\in \Delta_\Omega\}$. By \citet[Example 3]{wasserman}, we know that, if we put $\mathcal{P}=\text{Conv}(\cup_{i=1}^4 \mathscr{L}_i)$, the following holds
\begin{align*}
    \underline{P}(A)=\begin{cases}
        \min_{i\in\{1,\ldots,4\}} (1-\epsilon_i)\mathcal{L}_i(A), &\forall A \neq \Omega\\
        1, &\text{if }A=\Omega
    \end{cases}
\end{align*}
and
\begin{align*}
    \overline{P}(A)=\begin{cases}
        \max_{i\in\{1,\ldots,4\}} (1-\epsilon_i)\mathcal{L}_i(A) + \epsilon_i, &\forall A \neq \emptyset\\
        0 &\text{if } A=\emptyset
    \end{cases}.
\end{align*}
Simple calculations, then, give us the following values

\begin{table}[h!]
\centering
\begin{tabular}{l|ll}
                                 & $\underline{P}$ & $\overline{P}$ \\ \hline
$\{\omega_1\}$                   & $0.09$             & $0.58$          \\
$\{\omega_2\}$                   & $0.08$          & $0.82$            \\
$\{\omega_3\}$                   & $0.09$             & $0.68$         \\
$\{\omega_1,\omega_2\}$          & $0.32$          & $0.91$            \\
$\{\omega_2,\omega_3\}$          & $0.42$           & $0.91$            \\
$\{\omega_1,\omega_3\}$          & $0.18$             & $0.92$         \\
$\{\omega_1,\omega_2,\omega_3\}$ & $1$             & $1$           
\end{tabular}
\end{table}

As we can see, in this example too, the probability bounds imposed by the credal set are not too stringent, and in line with the evidence encapsulated in $\mathcal{L}_1,\ldots,\mathcal{L}_4$. Hence, the assumption that $P^\text{true} \equiv P^\star_5\in\mathcal{P}$ is very plausible. For a visual representation of the credal set $\mathcal{P}=\text{Conv}(\cup_{i=1}^4 \mathscr{L}_i)$, see the yellow convex region in the next figure; it is very similar to the convex region in Figure 2. This is unsurprising since the evidence used to derive the credal set in Section \ref{sec:inferring} is the same that we use to elicit $\mathcal{P}$ here.

\begin{figure}[h!]
    \centering
    \includegraphics[width = .33\textwidth]{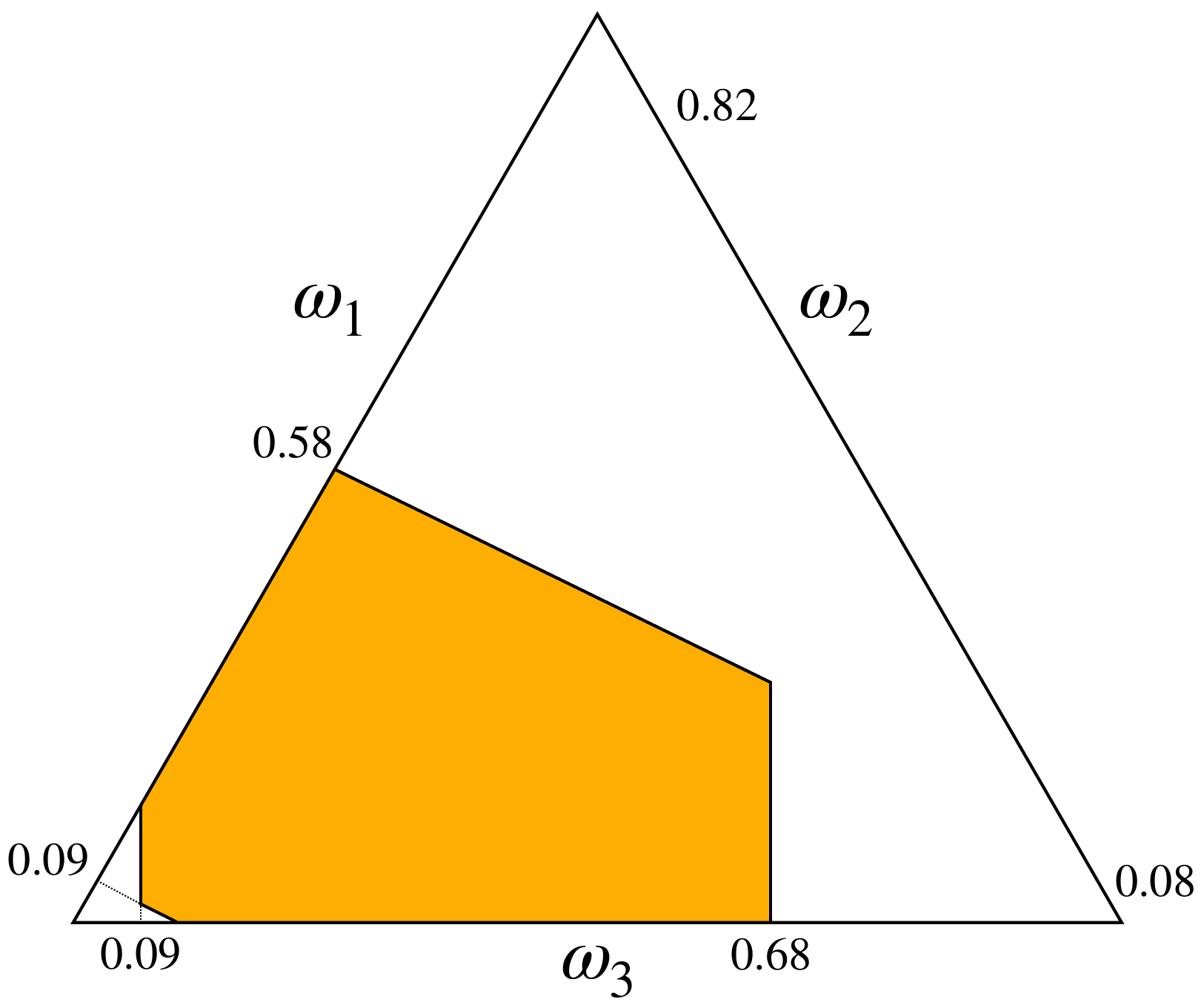}
\end{figure}

\section{A Fiducial Approach to Objectivist Modeling} \label{sec:fiducial}


An alternative objectivist approach to the ones presented in Section \ref{sec:frequentist}, proposed by Dempster and Almond \citep{almond92fiducial}, is based on \emph{fiducial inference} \citep{hannig2009generalized}. 
Consider a {parametric model}, i.e., a family of conditional probability distributions of the data $\{ f( \omega | \theta): \omega \in \Omega, \theta\in \Theta \}$, where $\Omega$ is, again, the observation space and $\Theta$ is a parameter space.
If the parametric (sampling) model is supplemented by a suitably designed auxiliary equation $\omega = a(\theta, u)$, where $u$ is a ``pivot" variable of known a-priori distribution $\mu$, 
one obtains a random set 
$\Gamma$ mapping pivot values $u$ to subsets
\[
\Gamma(u) = \{ (\omega,\theta) \in \Omega \times \Theta : \omega = a(\theta,u) \}
\]
of $\Omega \times \Theta $.
This, in turn, induces a belief function 
on the product space $\Omega \times \Theta$ defined as 
\[
\text{Bel}(A) = \sum_{u \in U : \Gamma(u) \subset A} \mu(u), 
\quad
A \subset \Omega \times \Theta. 
\]
This can be finally be marginalized to the data space $\Omega = \mathcal{X} \times \mathcal{Y}$ to generate a 
belief function there.
This approach was further extended by Martin, Zhang, and Liu, who used a ``predictive" random set to express uncertainty on the pivot variable itself, leading to a \emph{weak belief} inference technique \citep{Zhang11weak}. 

In our framework, in which a finite sample of $N$ training sets $\{D_i\}_{i=1}^N$ is available, we can derive $N$ many random sets $\Gamma_i$ as before, $i\in\{1,\ldots,N\}$, and consider the $N$ belief functions $\text{Bel}_i$ on $\Omega\times\Theta$ they induce. Then, we can compute their marginalization $\text{Bel}_i|_\Omega$ on the data space $\Omega=\mathcal{X}\times\mathcal{Y}$, and compute the minimum $\underline{\text{Bel}}|_\Omega\doteq \min_{i\in \{1,\ldots,N\}} \text{Bel}_i|_\Omega$.
It is easy to see that $\underline{\text{Bel}}|_\Omega$ is itself a well-defined belief function.
Finally, our credal set is given by $\mathcal{P}=\mathcal{M}(\underline{\text{Bel}}|_\Omega)$ as in Section \ref{sec:beliefs}.

\section{Related Work on the Computational Complexity of Credal Sets}\label{app-extra-rel-work}

In this section, we discuss some of the analyses existing in the literature of the computational complexity specific to the use of credal sets, particularly in the context of graphical models and probabilistic inference. Such approaches can be implemented for large datasets, but they often require approximation techniques to be computationally feasible \citep{lienen,maua,maua2}. Despite this, credal set approaches can be implemented for large datasets using techniques like parallel processing, distributed computing, and efficient data structures. Similar to Deep Learning-based approaches, utilization of high-performance computing resources, algorithm optimization, and domain-specific adaptations, the computational challenges can be effectively managed. Recent advancements demonstrate the practicality of these approaches. For instance, Credal-Set Interval Neural Networks (CreINNs) have shown significant improvements in inference time over variational Bayesian neural networks \citep{manchingal2024}. Thus, while the computational demands are comparable to those of deep learning-based methods, the robustness and flexibility of credal sets, as demonstrated in recent research, make them a practical and valuable approach \citep{marinescu,wrapper}.

\section{On the Relation Between Credal Sets and Continual Learning}\label{app-extra-CL}

The credal approach in this paper is closely linked to Continual Learning applications, which emphasize the need to handle diverse and sequential datasets to achieve robust and generalizable models. Recent works in continual learning have demonstrated the practical applications and benefits of using a multi-dataset setup. For instance, \citet{jeeveswaran} introduce a novel method for domain incremental learning, leveraging multiple datasets to adapt seamlessly across different tasks. Another example is \citet{yu2024boosting}, who propose a parameter-efficient continual learning framework that dynamically expands a pre-trained CLIP model through Mixture-of-Experts (MoE) adapters in response to new tasks. \citet{ye2024continual} address the challenges of multi-modal medical data representation learning through a continual self-supervised learning approach. These examples from recent studies demonstrate the practical applications and benefits of using a multi-dataset setup in a continual learning framework. Furthermore, some techniques use a multi-dataset setup in continual learning without relying on a specific temporal order. For example, \citet{alssum} present a replay mechanism based on single frames, arguing that video diversity is more crucial than temporal information under extreme memory constraints. By storing individual frames rather than contiguous sequences, they can maintain higher diversity in the replay memory, which leads to better performance in continual learning scenarios.

\newpage
\section*{NeurIPS Paper Checklist}

The checklist is designed to encourage best practices for responsible machine learning research, addressing issues of reproducibility, transparency, research ethics, and societal impact. Do not remove the checklist: {\bf The papers not including the checklist will be desk rejected.} The checklist should follow the references and precede the (optional) supplemental material.  The checklist does NOT count towards the page
limit. 

Please read the checklist guidelines carefully for information on how to answer these questions. For each question in the checklist:
\begin{itemize}
    \item You should answer \answerYes{}, \answerNo{}, or \answerNA{}.
    \item \answerNA{} means either that the question is Not Applicable for that particular paper or the relevant information is Not Available.
    \item Please provide a short (1–2 sentence) justification right after your answer (even for NA). 
\end{itemize}

{\bf The checklist answers are an integral part of your paper submission.} They are visible to the reviewers, area chairs, senior area chairs, and ethics reviewers. You will be asked to also include it (after eventual revisions) with the final version of your paper, and its final version will be published with the paper.

The reviewers of your paper will be asked to use the checklist as one of the factors in their evaluation. While "\answerYes{}" is generally preferable to "\answerNo{}", it is perfectly acceptable to answer "\answerNo{}" provided a proper justification is given (e.g., "error bars are not reported because it would be too computationally expensive" or "we were unable to find the license for the dataset we used"). In general, answering "\answerNo{}" or "\answerNA{}" is not grounds for rejection. While the questions are phrased in a binary way, we acknowledge that the true answer is often more nuanced, so please just use your best judgment and write a justification to elaborate. All supporting evidence can appear either in the main paper or the supplemental material, provided in appendix. If you answer \answerYes{} to a question, in the justification please point to the section(s) where related material for the question can be found.

IMPORTANT, please:
\begin{itemize}
    \item {\bf Delete this instruction block, but keep the section heading ``NeurIPS paper checklist"},
    \item  {\bf Keep the checklist subsection headings, questions/answers and guidelines below.}
    \item {\bf Do not modify the questions and only use the provided macros for your answers}.
\end{itemize}


\begin{enumerate}

\item {\bf Claims}
    \item[] Question: Do the main claims made in the abstract and introduction accurately reflect the paper's contributions and scope?
    \item[] Answer: \answerYes{} 
    \item[] Justification: The main claims presented in both the abstract and introduction align well with the contributions and scope of the paper. The abstract succinctly outlines the theoretical framework and the bounds introduced in the paper, laying the foundations for a `credal' learning theory to model variability in data-generating distributions. 
    Similarly, the introduction provides a comprehensive account of the relevant background, the motivation behind the research, and the significance of the proposed learning framework. 
    Throughout the manuscript, there is consistent support for the claims made in the abstract and introduction. The results from synthetic experiments conducted, and the theoretical results in the paper, provide detailed insights into the novel learning framework and reinforce the claims by demonstrating the effectiveness and relevance of the proposed framework. 
    \item[] Guidelines:
    \begin{itemize}
        \item The answer NA means that the abstract and introduction do not include the claims made in the paper.
        \item The abstract and/or introduction should clearly state the claims made, including the contributions made in the paper and important assumptions and limitations. A No or NA answer to this question will not be perceived well by the reviewers. 
        \item The claims made should match theoretical and experimental results, and reflect how much the results can be expected to generalize to other settings. 
        \item It is fine to include aspirational goals as motivation as long as it is clear that these goals are not attained by the paper. 
    \end{itemize}

\item {\bf Limitations}
    \item[] Question: Does the paper discuss the limitations of the work performed by the authors?
    \item[] Answer: \answerYes{}
    \item[] Justification: The main limitations of this paper are two. The first is that we only consider the zero-one loss in our results. The other is that we assume that the true distribution which the elements of the new test set $D_{N+1}$ are sampled from, belongs to the credal set we derive at training time.
    \item[] Guidelines:
    \begin{itemize}
        \item The answer NA means that the paper has no limitation while the answer No means that the paper has limitations, but those are not discussed in the paper. 
        \item The authors are encouraged to create a separate "Limitations" section in their paper.
        \item The paper should point out any strong assumptions and how robust the results are to violations of these assumptions (e.g., independence assumptions, noiseless settings, model well-specification, asymptotic approximations only holding locally). The authors should reflect on how these assumptions might be violated in practice and what the implications would be.
        \item The authors should reflect on the scope of the claims made, e.g., if the approach was only tested on a few datasets or with a few runs. In general, empirical results often depend on implicit assumptions, which should be articulated.
        \item The authors should reflect on the factors that influence the performance of the approach. For example, a facial recognition algorithm may perform poorly when image resolution is low or images are taken in low lighting. Or a speech-to-text system might not be used reliably to provide closed captions for online lectures because it fails to handle technical jargon.
        \item The authors should discuss the computational efficiency of the proposed algorithms and how they scale with dataset size.
        \item If applicable, the authors should discuss possible limitations of their approach to address problems of privacy and fairness.
        \item While the authors might fear that complete honesty about limitations might be used by reviewers as grounds for rejection, a worse outcome might be that reviewers discover limitations that aren't acknowledged in the paper. The authors should use their best judgment and recognize that individual actions in favor of transparency play an important role in developing norms that preserve the integrity of the community. Reviewers will be specifically instructed to not penalize honesty concerning limitations.
    \end{itemize}

\item {\bf Theory Assumptions and Proofs}
    \item[] Question: For each theoretical result, does the paper provide the full set of assumptions and a complete (and correct) proof?
    \item[] Answer: \answerYes{} 
    \item[] Justification: The assumptions are thoroughly discussed in the main body of the paper. Proofs are provided in the Appendix section. We have taken great care to provide complete and correct proofs, structured in a logical and transparent manner, for each theoretical result.
    \item[] Guidelines:
    \begin{itemize}
        \item The answer NA means that the paper does not include theoretical results. 
        \item All the theorems, formulas, and proofs in the paper should be numbered and cross-referenced.
        \item All assumptions should be clearly stated or referenced in the statement of any theorems.
        \item The proofs can either appear in the main paper or the supplemental material, but if they appear in the supplemental material, the authors are encouraged to provide a short proof sketch to provide intuition. 
        \item Inversely, any informal proof provided in the core of the paper should be complemented by formal proofs provided in appendix or supplemental material.
        \item Theorems and Lemmas that the proof relies upon should be properly referenced. 
    \end{itemize}

    \item {\bf Experimental Result Reproducibility}
    \item[] Question: Does the paper fully disclose all the information needed to reproduce the main experimental results of the paper to the extent that it affects the main claims and/or conclusions of the paper (regardless of whether the code and data are provided or not)?
    \item[] Answer: \answerYes{We conducted three synthetic experiments. One serves as a sanity check, while the other demonstrates that the bound established in Theorem \ref{thm1} is tighter than the classical bound found in Statistical Learning Theory (SLT). The third one validates empirically Theorem \ref{thm2}, showing that even under no realizability, the empirical risk minimizer’s performance is close to the theoretical best within a computable bound.} 
    \item[] Justification: Drawing inspiration from recent literature, including studies by \citet{kacham2022sketching, amit2022integral, li2021high}, we have conducted synthetic experiments in a similar manner. While these works are primarily theoretical, they recognize the importance of experimental validation through preliminary analysis.
    \item[] Guidelines:
    \begin{itemize}
        \item The answer NA means that the paper does not include experiments.
        \item If the paper includes experiments, a No answer to this question will not be perceived well by the reviewers: Making the paper reproducible is important, regardless of whether the code and data are provided or not.
        \item If the contribution is a dataset and/or model, the authors should describe the steps taken to make their results reproducible or verifiable. 
        \item Depending on the contribution, reproducibility can be accomplished in various ways. For example, if the contribution is a novel architecture, describing the architecture fully might suffice, or if the contribution is a specific model and empirical evaluation, it may be necessary to either make it possible for others to replicate the model with the same dataset, or provide access to the model. In general. releasing code and data is often one good way to accomplish this, but reproducibility can also be provided via detailed instructions for how to replicate the results, access to a hosted model (e.g., in the case of a large language model), releasing of a model checkpoint, or other means that are appropriate to the research performed.
        \item While NeurIPS does not require releasing code, the conference does require all submissions to provide some reasonable avenue for reproducibility, which may depend on the nature of the contribution. For example
        \begin{enumerate}
            \item If the contribution is primarily a new algorithm, the paper should make it clear how to reproduce that algorithm.
            \item If the contribution is primarily a new model architecture, the paper should describe the architecture clearly and fully.
            \item If the contribution is a new model (e.g., a large language model), then there should either be a way to access this model for reproducing the results or a way to reproduce the model (e.g., with an open-source dataset or instructions for how to construct the dataset).
            \item We recognize that reproducibility may be tricky in some cases, in which case authors are welcome to describe the particular way they provide for reproducibility. In the case of closed-source models, it may be that access to the model is limited in some way (e.g., to registered users), but it should be possible for other researchers to have some path to reproducing or verifying the results.
        \end{enumerate}
    \end{itemize}

\item {\bf Open access to data and code}
    \item[] Question: Does the paper provide open access to the data and code, with sufficient instructions to faithfully reproduce the main experimental results, as described in supplemental material?
    \item[] Answer: \answerNo{} 
    \item[] Justification: The experiments are synthetic and extremely easy to reproduce. Also, the experiments do not require any special libraries or large-scale real-world datasets. 
    \item[] Guidelines:
    \begin{itemize}
        \item The answer NA means that paper does not include experiments requiring code.
        \item Please see the NeurIPS code and data submission guidelines (\url{https://nips.cc/public/guides/CodeSubmissionPolicy}) for more details.
        \item While we encourage the release of code and data, we understand that this might not be possible, so “No” is an acceptable answer. Papers cannot be rejected simply for not including code, unless this is central to the contribution (e.g., for a new open-source benchmark).
        \item The instructions should contain the exact command and environment needed to run to reproduce the results. See the NeurIPS code and data submission guidelines (\url{https://nips.cc/public/guides/CodeSubmissionPolicy}) for more details.
        \item The authors should provide instructions on data access and preparation, including how to access the raw data, preprocessed data, intermediate data, and generated data, etc.
        \item The authors should provide scripts to reproduce all experimental results for the new proposed method and baselines. If only a subset of experiments are reproducible, they should state which ones are omitted from the script and why.
        \item At submission time, to preserve anonymity, the authors should release anonymized versions (if applicable).
        \item Providing as much information as possible in supplemental material (appended to the paper) is recommended, but including URLs to data and code is permitted.
    \end{itemize}

\item {\bf Experimental Setting/Details}
    \item[] Question: Does the paper specify all the training and test details (e.g., data splits, hyperparameters, how they were chosen, type of optimizer, etc.) necessary to understand the results?
    \item[] Answer: \answerYes{} 
    \item[] Justification: We have clearly mentioned all the details of our synthetic experiments. In the Supplementary Section \ref{app:exp}, we provide comprehensive details of our training data and hyperparameter values. Tables \ref{experiment_1_table} and \ref{experiment_1_table_Normal} present the results of our bound tightness experiments alongside the hyperparameter values used. Additionally, Table \ref{experiment_2_table} displays the outcomes of our sanity check experiment. Table \ref{experiment_3_table} presented the results of the experimental evaluation of Theorem \ref{thm2}. Together, the details of these experiments enhance the support for our theoretical results.    \item[] Guidelines:
    \begin{itemize}
        \item The answer NA means that the paper does not include experiments.
        \item The experimental setting should be presented in the core of the paper to a level of detail that is necessary to appreciate the results and make sense of them.
        \item The full details can be provided either with the code, in appendix, or as supplemental material.
    \end{itemize}

\item {\bf Experiment Statistical Significance}
    \item[] Question: Does the paper report error bars suitably and correctly defined or other appropriate information about the statistical significance of the experiments?
    \item[] Answer: \answerNo{} 
    \item[] Justification:  In our experiments, we primarily rely on sampling from known distributions and calculating the theoretical bounds as detailed in the main text of the paper. Given the theoretical nature of our analysis and the use of these known distributions, traditional error bars or measures of statistical significance are not necessary for conveying the reliability of our results.
    
    \item[] Guidelines:
    \begin{itemize}
        \item The answer NA means that the paper does not include experiments.
        \item The authors should answer "Yes" if the results are accompanied by error bars, confidence intervals, or statistical significance tests, at least for the experiments that support the main claims of the paper.
        \item The factors of variability that the error bars are capturing should be clearly stated (for example, train/test split, initialization, random drawing of some parameter, or overall run with given experimental conditions).
        \item The method for calculating the error bars should be explained (closed form formula, call to a library function, bootstrap, etc.)
        \item The assumptions made should be given (e.g., Normally distributed errors).
        \item It should be clear whether the error bar is the standard deviation or the standard error of the mean.
        \item It is OK to report 1-sigma error bars, but one should state it. The authors should preferably report a 2-sigma error bar than state that they have a 96\% CI, if the hypothesis of Normality of errors is not verified.
        \item For asymmetric distributions, the authors should be careful not to show in tables or figures symmetric error bars that would yield results that are out of range (e.g. negative error rates).
        \item If error bars are reported in tables or plots, The authors should explain in the text how they were calculated and reference the corresponding figures or tables in the text.
    \end{itemize}

\item {\bf Experiments Compute Resources}
    \item[] Question: For each experiment, does the paper provide sufficient information on the computer resources (type of compute workers, memory, time of execution) needed to reproduce the experiments?
    \item[] Answer: \answerNo{} 
    \item[] Justification: There is no need to discuss the allocation of computer resources, as the synthetic experiments we conducted can be performed on any standard computer.
    
    \item[] Guidelines:
    \begin{itemize}
        \item The answer NA means that the paper does not include experiments.
        \item The paper should indicate the type of compute workers CPU or GPU, internal cluster, or cloud provider, including relevant memory and storage.
        \item The paper should provide the amount of compute required for each of the individual experimental runs as well as estimate the total compute. 
        \item The paper should disclose whether the full research project required more compute than the experiments reported in the paper (e.g., preliminary or failed experiments that didn't make it into the paper). 
    \end{itemize}
    
\item {\bf Code Of Ethics}
    \item[] Question: Does the research conducted in the paper conform, in every respect, with the NeurIPS Code of Ethics \url{https://neurips.cc/public/EthicsGuidelines}?
    \item[] Answer: \answerYes{} 
    \item[] Justification: The experiments in our work are entirely synthetic. This means that they do not involve real individuals or sensitive data that could raise ethical concerns. The synthetic nature of our experiments ensures that they inherently avoid issues such as privacy breaches or misuse of personal data. 
    \item[] Guidelines:
    \begin{itemize}
        \item The answer NA means that the authors have not reviewed the NeurIPS Code of Ethics.
        \item If the authors answer No, they should explain the special circumstances that require a deviation from the Code of Ethics.
        \item The authors should make sure to preserve anonymity (e.g., if there is a special consideration due to laws or regulations in their jurisdiction).
    \end{itemize}

\item {\bf Broader Impacts}
    \item[] Question: Does the paper discuss both potential positive societal impacts and negative societal impacts of the work performed?
    \item[] Answer: \answerNA{} 
    \item[] Justification: This paper presents work whose goal is to advance the theoretical foundations of Machine Learning. There are many potential societal consequences of our work, none of which we feel must be specifically highlighted here.
    \item[] Guidelines:
    \begin{itemize}
        \item The answer NA means that there is no societal impact of the work performed.
        \item If the authors answer NA or No, they should explain why their work has no societal impact or why the paper does not address societal impact.
        \item Examples of negative societal impacts include potential malicious or unintended uses (e.g., disinformation, generating fake profiles, surveillance), fairness considerations (e.g., deployment of technologies that could make decisions that unfairly impact specific groups), privacy considerations, and security considerations.
        \item The conference expects that many papers will be foundational research and not tied to particular applications, let alone deployments. However, if there is a direct path to any negative applications, the authors should point it out. For example, it is legitimate to point out that an improvement in the quality of generative models could be used to generate deepfakes for disinformation. On the other hand, it is not needed to point out that a generic algorithm for optimizing neural networks could enable people to train models that generate Deepfakes faster.
        \item The authors should consider possible harms that could arise when the technology is being used as intended and functioning correctly, harms that could arise when the technology is being used as intended but gives incorrect results, and harms following from (intentional or unintentional) misuse of the technology.
        \item If there are negative societal impacts, the authors could also discuss possible mitigation strategies (e.g., gated release of models, providing defenses in addition to attacks, mechanisms for monitoring misuse, mechanisms to monitor how a system learns from feedback over time, improving the efficiency and accessibility of ML).
    \end{itemize}
    
\item {\bf Safeguards}
    \item[] Question: Does the paper describe safeguards that have been put in place for the responsible release of data or models that have a high risk for misuse (e.g., pretrained language models, image generators, or scraped datasets)?
    \item[] Answer: \answerNA{} 
    \item[] Justification:  Our study does not utilize or produce pretrained models, image generators, or datasets that are derived from scraping, which are typically associated with high risks for misuse.
    \item[] Guidelines:
    \begin{itemize}
        \item The answer NA means that the paper poses no such risks.
        \item Released models that have a high risk for misuse or dual-use should be released with necessary safeguards to allow for controlled use of the model, for example by requiring that users adhere to usage guidelines or restrictions to access the model or implementing safety filters. 
        \item Datasets that have been scraped from the Internet could pose safety risks. The authors should describe how they avoided releasing unsafe images.
        \item We recognize that providing effective safeguards is challenging, and many papers do not require this, but we encourage authors to take this into account and make a best faith effort.
    \end{itemize}

\item {\bf Licenses for existing assets}
    \item[] Question: Are the creators or original owners of assets (e.g., code, data, models), used in the paper, properly credited and are the license and terms of use explicitly mentioned and properly respected?
    \item[] Answer: \answerNA{} 
    \item[] Justification: The above is not applicable to our research, as our study exclusively employs synthetic data and self-generated models without relying on external assets, special libraries, or models. Therefore, there are no third-party assets involved that would require attribution or adherence to licensing terms.
    \item[] Guidelines:
    \begin{itemize}
        \item The answer NA means that the paper does not use existing assets.
        \item The authors should cite the original paper that produced the code package or dataset.
        \item The authors should state which version of the asset is used and, if possible, include a URL.
        \item The name of the license (e.g., CC-BY 4.0) should be included for each asset.
        \item For scraped data from a particular source (e.g., website), the copyright and terms of service of that source should be provided.
        \item If assets are released, the license, copyright information, and terms of use in the package should be provided. For popular datasets, \url{paperswithcode.com/datasets} has curated licenses for some datasets. Their licensing guide can help determine the license of a dataset.
        \item For existing datasets that are re-packaged, both the original license and the license of the derived asset (if it has changed) should be provided.
        \item If this information is not available online, the authors are encouraged to reach out to the asset's creators.
    \end{itemize}

\item {\bf New Assets}
    \item[] Question: Are new assets introduced in the paper well documented and is the documentation provided alongside the assets?
    \item[] Answer: \answerNA{} 
    \item[] Justification: The concern regarding the documentation of new assets is not applicable to our work. This is because our research does not introduce any new assets such as datasets, models, or software tools that would require documentation or accompanying files. We have focused solely on synthetic experiments, which do not involve creating or releasing new assets.
    \item[] Guidelines:
    \begin{itemize}
        \item The answer NA means that the paper does not release new assets.
        \item Researchers should communicate the details of the dataset/code/model as part of their submissions via structured templates. This includes details about training, license, limitations, etc. 
        \item The paper should discuss whether and how consent was obtained from people whose asset is used.
        \item At submission time, remember to anonymize your assets (if applicable). You can either create an anonymized URL or include an anonymized zip file.
    \end{itemize}

\item {\bf Crowdsourcing and Research with Human Subjects}
    \item[] Question: For crowdsourcing experiments and research with human subjects, does the paper include the full text of instructions given to participants and screenshots, if applicable, as well as details about compensation (if any)? 
    \item[] Answer: \answerNA{} 
    \item[] Justification: This is not applicable in our case as our study exclusively involves synthetic experiments and does not engage with crowdsourcing methods or human subjects. Therefore, there are no participant instructions or compensation issues to report.
    \item[] Guidelines:
    \begin{itemize}
        \item The answer NA means that the paper does not involve crowdsourcing nor research with human subjects.
        \item Including this information in the supplemental material is fine, but if the main contribution of the paper involves human subjects, then as much detail as possible should be included in the main paper. 
        \item According to the NeurIPS Code of Ethics, workers involved in data collection, curation, or other labor should be paid at least the minimum wage in the country of the data collector. 
    \end{itemize}

\item {\bf Institutional Review Board (IRB) Approvals or Equivalent for Research with Human Subjects}
    \item[] Question: Does the paper describe potential risks incurred by study participants, whether such risks were disclosed to the subjects, and whether Institutional Review Board (IRB) approvals (or an equivalent approval/review based on the requirements of your country or institution) were obtained?
    \item[] Answer: \answerNA{} 
    \item[] Justification: Our research strictly involves synthetic experiments and does not include human participants. Consequently, the need for IRB review or any equivalent ethical oversight does not arise in the context of our research methodology.
    \item[] Guidelines:
    \begin{itemize}
        \item The answer NA means that the paper does not involve crowdsourcing nor research with human subjects.
        \item Depending on the country in which research is conducted, IRB approval (or equivalent) may be required for any human subjects research. If you obtained IRB approval, you should clearly state this in the paper. 
        \item We recognize that the procedures for this may vary significantly between institutions and locations, and we expect authors to adhere to the NeurIPS Code of Ethics and the guidelines for their institution. 
        \item For initial submissions, do not include any information that would break anonymity (if applicable), such as the institution conducting the review.
    \end{itemize}

\end{enumerate}

\end{document}